\newcommand{\innerproduct}[2]{\langle #1, #2 \rangle}
\title{Utilizing Free Clients in Federated Learning for Focused Model Enhancement}
\newcounter{thmlemma}
\newtheorem{theorem}[thmlemma]{Theorem}
\newtheorem{assumption}{Assumption}
\newtheorem{lemma}[thmlemma]{Lemma}
\newtheorem{defn}{Definition}[section] 
\author{%
  Aditya Narayan Ravi, Ilan Shomorony \\
  Department of Electrical and Computer Engineering\\
  University of Illinois Urbana-Champaign\\
  \texttt{anravi2@illinois.edu} \\ \texttt{ilans@illinois.edu} \\
}
\newcommand{\w}{{\mathbf w}}
\newcommand{\vi}{{\mathbf v}}
\newcommand{\g}{{\mathbf g}}
\newcommand{\Prio}{{\mathcal{P}}}
\begin{document}

\maketitle

\begin{abstract}
 Federated Learning (FL) is a distributed machine learning approach to learn models on decentralized heterogeneous data, without the need for clients to share their data.  
 Many existing FL approaches assume that all clients have equal importance and construct a global objective based on all clients. 
 We consider a version of FL we call Prioritized FL, where the goal is to learn a weighted mean objective of a subset of clients, designated as priority clients.
 An important question arises: How do we choose and incentivize well-aligned non-priority clients to participate in the federation, while discarding misaligned clients?
 We present FedALIGN (Federated Adaptive Learning with Inclusion of Global Needs) to address this challenge. The algorithm employs a matching strategy that chooses non-priority clients based on how similar the model's loss is on their data compared to the global data, thereby ensuring the use of non-priority client gradients only when it is beneficial for priority clients. This approach ensures mutual benefits as non-priority clients are motivated to join when the model performs satisfactorily on their data, and priority clients can utilize their updates and computational resources when their goals align.
 We present a convergence analysis that quantifies the trade-off between client selection and speed of convergence. Our algorithm 
 shows faster convergence and higher test accuracy 
 than baselines for various synthetic and benchmark datasets.
\end{abstract}

\section{Introduction}\label{sec:introduction}

Federated Learning (FL) \cite{pmlr-v54-mcmahan17a, FedSurvey21} is a distributed learning setting 
where a set of clients, with highly heterogeneous data, limited computation, and communication capabilities, are connected to a central server.
For $N$ clients, the goal of FL 
is for the central server to learn a model that minimizes
\begin{align}\label{eq:FlOG}
\textstyle
    F(\w) := \sum_{k=1}^N p_kF_k(\w),
\end{align}
where $\w \in \mathbb{R}^m$ is the model parameter. 
The objective function $F_k(\w)$ is the local objective function of the $k$th client, and $p_k$ is the fraction of data at the $k$th client. 
A key constraint of FL is that the server wishes to minimize $F(\w)$ without directly observing the data at each of the $N$ clients.

A popular method to solve \eqref{eq:FlOG} in the FL setting is Federated Averaging (FedAvg) \cite{BrendanMcMahan2017}. 
In FedAvg, $E$ iterations of Stochastic Gradient Descent (SGD) are performed locally at the clients before each communication round. 
After the $t$th local iteration,
if $t$ corresponds to a communication round (i.e., $t \pmod E = 0$),
a participating client $k$ sends the updated model parameter $\w_t^k$ to the server,
which aggregates them as $\w^t = \sum p_k \w_t^k$, updates the global model, and shares it with all clients.

The data across the clients are generally heterogeneous, in which case the model updates from the different clients are not well aligned. In \cite{li2020convergence}, the authors use a heterogeneity measure $\Gamma$, given by
\begin{align}
\textstyle \label{eq:gamma}
    \Gamma := F^{*} - \sum_{k=1}^N p_k F_k^{*},
\end{align}
which captures the gap between the global objective's minimum $F^{*}$ and a weighted sum of the minima $F_k^*$ of each local objective $F_k$. 
A convergence analysis for FedAvg (in the strongly convex case~\cite{li2020convergence}) implies that the expected convergence error $E[F{(\w_T)}] - F^*$ at time step $T$ scales as
\begin{align}\label{eq:fedavgconv}
    E[F(\w_T)] - F^* \sim (C + \Gamma)/T,
\end{align}
where $C$ is a constant. 
Intuitively, a more severe misalignment  between the local and global objectives leads to a larger $\Gamma$ and to worse training performance.
In particular, this suggests that FL systems that operate with a set of clients with reasonably aligned objectives may perform better.




\textbf{Prioritizing Clients in the Model.} 
The idea of selecting a subset of clients 
to include in each communication round in order to tackle heterogeneity is not new~\cite{Fu2022,karimireddy2021scaffold,Li2018}. 
In addition to reducing heterogeneity, 
client selection strategies have been proposed in the FL literature to
accelerate convergence rates \cite{Cho2020,Chen2020}, promote fairness \cite{fairnessSel}, and bolster robustness \cite{FastCon}. These methods prioritize certain clients based on the task at hand, during the aggregation process.


In various real-world applications, some clients are inherently prioritized, compared to others.
In subscription-based services, for instance, it is often the case that some clients have a paid subscription, while others use a free version of the service.
One example is online streaming services, which may provide perks such as ad-free content only to paid subscribers.
In such cases, the central server may want to optimize a model (e.g., a recommendation system) that prioritizes the paid subscribers, while still taking advantage of the data provided by all clients.
Another example are internet service providers, which have a set of paying clients, but may also provide free-access Wi-Fi hotspots.
In such cases, the free service may be provided in exchange for some input from the clients, such as data or computational resources. 
These clients are likely to have highly heterogeneous data, 
and may not be willing to directly share their data, which fits naturally into a FL setting.

In these cases, the global objective function $F(\w)$ should only take into consideration the priority clients, 
and
it is important to understand whether the non-priority clients can be leveraged to improve the prioritized global objective.
Is it possible to incentivize 
``well-aligned'' non-priority clients to participate and contribute to model improvement, while discarding the updates from ``misaligned'' clients?
To formalize this question, we propose the Prioritized Federated Learning (PFL) setting.
We discuss additional motivating scenarios for PFL in Section~\ref{sec:relatedworks}



\textbf{Proposed Model.} In PFL, there are $N$ clients, and a subset $\Prio \subset [1:N]$ is designated as \emph{priority} clients. Similar to \eqref{eq:FlOG}, the goal is to minimize
    $F(\w) = \sum_{k \in \Prio} p_k F_k(\w)$.
The remaining clients are designated as \emph{non-priority} clients. Non-priority clients, if chosen wisely and incorporated into the learning process, can potentially accelerate the model's convergence through additional gradient steps. 
To facilitate this, we propose to devise an algorithm capable of discerning well-aligned non-priority clients and aggregating their updates along with those from the priority clients. The critical challenge here lies in finding the right balance: including more non-priority clients might expedite the convergence but could also introduce bias into the global model, especially if these clients are not well-aligned with the primary objective. Conversely, implementing more stringent selection criteria ensures only the most well-aligned clients contribute, but at the expense of faster convergence from additional updates.
Against this backdrop, our paper will address two questions that arise in this setting: 
(1) What criterion should we use to effectively select well-aligned non-priority clients without compromising the integrity of our learning objective? (2) Can we establish a theoretical framework that quantifies the trade-off between accelerated convergence and potential bias introduced by including updates from non-priority clients?
\textbf{FedALIGN and Our Contributions.} To address the questions proposed for this PFL setting, we propose FedALIGN (Federated Adaptive Learning with Inclusion of Global Needs). 
FedALIGN's strategy for non-priority client inclusion can be combined with a
variety of existing algorithms. 
In every communication round, the server transmits not only the global model but also its associated loss to all participating clients.
Priority clients invariably contribute to the global model's updates. Non-priority clients, on the other hand, participate only if their local loss is on par with or lower than the communicated global loss. Notice that this is a key step  in incentivizing client participation, since well-aligned non-priority clients only participate if they get a satisfactory model.
The server then incorporates only those updates from clients that are close to the global loss.

The core advantage of FedALIGN lies in its adaptive capacity. It exploits well-aligned non-priority clients when their contribution is beneficial, also incentivizing their participation. At the same time, it prevents potential harm to the model from misaligned clients, thereby protecting the interests of priority clients.
Theoretical analyses suggest that FedALIGN effectively navigates data heterogeneity amongst priority clients, accelerating convergence at the expense of a manageable bias term. This bias can be subsequently fine-tuned in later communication rounds to eliminate objective bias.

We present the FedALIGN algorithm as a modification of FedAvg (although it can be built upon other algorithms too, as discussed in the supplementary material). 
To analyze its performance, 
we adapt the convergence analysis from \citet{li2020convergence}. 
For FedALIGN, the convergence error is given by:
\begin{align} \label{eq:errorfedalign}
E[F(\w_T)] - F^* \sim \frac{C + \theta_T\Gamma}{T} + \rho_T.
\end{align}
The factor $\theta_T \in [0,1]$ captures the reduction in heterogeneity achieved by FedALIGN. 
The more non-priority clients are included in all communication rounds till $T$, the lower the value of $\theta_T$. This is balanced with a bias term $\rho_T$, which captures the misalignment in the selected non-priority clients.

The terms $\theta_T$ and $\rho_T$ exhibit an inverse relationship and depend on the desired proximity between global and local losses. 
Specifically when $\theta_T = 1$ and $\rho_T = 0$ (we show in Section~\ref{sec:convergence} that this choice achieved by ignoring all non-priority clients), we recover \eqref{eq:fedavgconv}, the convergence rate for FedAvg.
The tradeoff between $\theta_T$ and $\rho_T$ enables us to fine-tune these terms, favoring a rapid convergence initially, and gradually adjusting the bias as needed.
Indeed experiments implementing FedALIGN shows that exploiting this trade-off leads to faster convergence and better performance over baseline algorithms, both on synthetic data and many benchmark datasets (FMNIST, EMNIST and CIFAR10). 


\textbf{Prioritization and Fairness in AI.} 
While the prioritization of a subset of clients may at first seem antithetical to the idea of fairness in AI, the setting we explore seeks a symbiosis between the priority clients and the non-priority clients.
The participation of a broad range of clients, each with their unique data, contributes to building robust, versatile, and more accurate models. 
This, in turn, improves the quality of services for all clients, including the priority ones, who benefit from the enhanced models, and the non-priority ones who gain access to a model trained on high-quality data and powerful computational resources.

\section{Problem Setting and Related Works}
\label{sec:relatedworks}

In PFL, we assume that there are $N$ clients, and client $k$ has the local dataset $\mathcal{B}_k$, with $|\mathcal{B}_k| = D_k$. 
The clients are split into two sets: priority clients $\Prio \subset [1:N]$ and non-priority clients. 
All clients are connected to a central server, 
who wishes to minimize the objective
\begin{align}
\textstyle    \sum_{k \in \Prio} p_k F_i(\w),
\end{align}
where $p_k := D_k/\sum_{i\in \Prio}D_i$ is the fraction of data at the $k$th client 
and $F_k(\w) := \frac{1}{D_k}\sum_{\xi \in \mathcal{B}_k} f(\w;\xi)$ is the local objective function of the  $k$th client.
Note that since we normalize the data with the total data possessed only by the priority clients, $\sum_{k \in \Prio}p_k = 1$, but $\sum_{k=1}^n p_k \neq 1$  in general.
We denote $t$ to indicate the number of local updates. In each communication round, each client that participates in that round, independently runs $E$ local epochs. Aggregation happens when $t\pmod E = 0$.


\textbf{Other Settings with Client Prioritization.} 
The need to prioritize a subset of clients may arise in contexts other than subscription services discussed in Section~\ref{sec:introduction}. 
For instance, existing FL settings that focus on optimizing objectives defined by their clients may face challenges when integrating additional clients. While these new clients may be valuable to the FL setting, their inclusion often necessitates a shift in the learning objective. FedALIGN provides a partial solution to this predicament, enabling the inclusion of well-aligned clients without necessitating alterations to the learning objective.

Another context that naturally leads to prioritization of some clients, is when some clients are slow, unreliable, or resource-poor. This leads
to the ``straggler effect'', which can significantly hamper the overall learning process~\cite{Wang2020}. The straggler effect occurs because the global model update must wait until slower clients complete their local computations, and the overall update time is 
when the overall computation time is 
dictated by the slowest participants. 
In \cite{Wang2020}, the authors use a weighting rule that prioritizes faster clients to perform more updates than the stragglers.
Further some consider penalizing slower clients by excluding their updates, after a period of time to improve aggregation speeds \cite{hard2019federated,NishioEdge}.
In many scenarios it is usually known which clients are likely to be slow. 
An example is households with multiple smart devices, where models are trained with a few powerful devices like smart phones and laptops, with additional less powerful devices like IOT devices (with potentially different data) also available for computation. 
In this case it may be fruitful to exclude stragglers from the global objective and only include their updates if they are well-aligned to this objective. 
Moreover in our supplementary material we extend our convergence analysis to the case where non-priority clients are free to participate in aggregation as they want to, naturally modelling stragglers who may only be able to provide updates in a few communication rounds. 

\textbf{Selfish Federated Learning.}
To the best of our knowledge, only one the work in~\cite{For2022} has explored a model similar to ours, which they call Selfish Federated Learning.
In the Selfish paradigm, it is assumed that non-priority clients always contribute to the federation irrespective of what global model they receive, an assumption that may not hold true in practice, since clients need to be incentivized to participate.
Secondly, the previous model assumes that the number of priority clients are much smaller than the total clients. This assumption can be overly restrictive, limiting the model's applicability in varied scenarios. Furthermore, their algorithm exhibits convergence only under highly stringent learning rates, thereby further confining its practical utility.
In terms of client participation, the model necessitates interaction with all non-priority clients in every communication round, which is not realistic. Lastly, it presumes identical data distribution among priority clients, which essentially aligns it with the framework of Personalized FL, rendering the premise of their approach somewhat trivial.
Our work, in contrast, imposes no such assumptions, but instead looks at a more pragmatic scenario as outlined in the introduction. While an other study \cite{Chayti2021} has investigated scenarios where clients not directly involved in the global objective can still contribute to its improvement, these works necessitate full participation. 
These approaches are incompatible with federated learning contexts, where client incentivization is crucial. Our model and algorithm naturally addresses these challenges. 

\textbf{Personalized FL.} 
Various FL strategies have been explored to create personalized models. One common approach involves adapting a global FL model to each local client's specific requirements \cite{HaoPFL,Yang22,chai2020tifl,smith2018federated}, while another leverages inter-client relationships to fine-tune personalized models, often achieved by introducing adjustable regularization terms into each client's local objective \cite{Kairouz2019,karimireddy2021scaffold,cheng2021federated}. In these scenarios, a universally applicable model is usually maintained, which is subsequently refined to suit individual clients' local data.
Furthermore, clustering techniques \cite{sattler2019clustered} have been employed, wherein clients are categorized based on the similarity of their data, leading to the development of models tailored to each cluster's unique characteristics. The end goal remains the same: the customization of models for each participating client during the aggregation phase. For a comprehensive review of personalized FL, we refer to the survey in \cite{Tan2022}.
Our approach deviates from these strategies in two critical ways: (1) Our global objective is constructed solely from a pre-selected subset of clients, who might exhibit substantial heterogeneity, and (2) Our training does not explicitly weigh the importance of non-priority clients. Instead, we train a single model that inherently benefits non-priority clients that are well-aligned. This strategy diverges from the Personalized FL paradigm, where every client's specificities are given significant consideration. 

\textbf{Tackling Heterogeneity in FL.} Many methods aim to tackle data heterogeneity in FL. FedProx \cite{Kairouz2019} adds proximal terms to local objectives and SCAFFOLD \cite{karimireddy2021scaffold} tackles client drift. 
There are many other relevant works which aim to tackle heterogeneity \cite{AdaFedReddi,haddadpour2019convergence,KhaledHetero19,stich2021errorfeedback,woodworth2020local,koloskova2021unified,Zhang_2021,pathak2020fedsplit}.
These methods aims to tune a global model to work well on all local clients. FedALIGN also targets heterogeneity, but  specifically tries to reduce the heterogeneity for priority clients by leveraging non-priority clients. 
The selection approach of FedALIGN may be combined 
with other algorithms to improve their performance in this setting.

\textbf{Client Utility to Weigh Updates.}
Several existing methods employ sample re-weighting strategies based on individual client utility. Some approaches define client utility based on the norm of gradients \cite{Zeng21}, while others leverage local loss values as the utility measure \cite{Chen2020}. One particular study biases weights in favor of clients with higher losses for faster convergence towards the global objective, albeit at the expense of an enduring bias term \cite{Cho2020}.
Alternatively, some approaches increase weights on clients with lower losses, mitigating the risk of clients with noisy data adversely influencing the global model \cite{pmlr-v97-ghadikolaei19a}. Our approach diverges from these strategies. We employ a loss matching strategy, wherein non-priority clients only share models when the received global model yields sufficiently low loss on their local dataset. The server then aggregates these models, contingent upon their proximity to the global loss. Our setting naturally allows for such a strategy, since we have separated the clients we intend to prioritize while learning the model.
Furthermore, additional studies have examined client re-weighting based on specific considerations like variance in local updates \cite{Wang2020}. For a more extensive overview of client selection and update, we refer the reader to the survey in \cite{Fu2022}. 


\section{FedALIGN and Convergence Analysis}
\label{sec:convergence}
In this section we describe FedALIGN and prove its convergence guarantee. 
Our results are presented for the full client participation setting, though the analysis can be easily extended to the case where a subset of clients (both priority and non-priority) are picked uniformly at random. Further we can extend the result for a more general case, where non-priority clients follow an arbitrary participation rule or are given the choice to participate in any given communication round. 
This could be relevant in the stragglers setting, for example.
These cases are considered in the supplementary material.

\subsection{FedALIGN}
FedALIGN introduces a simple decision rule to select clients based on the relative discrepancy between local and global loss. The non-priority clients are conditionally included in the aggregation process during a communication round corresponding to the $t$th local round, if for the provided global model $\w_t$, and a chosen threshold $\epsilon_t$ (which can, in general be tuned in every communication round), the absolute value of the difference between a non-priority client's local loss and the global loss is below $\epsilon_t$; i.e., $|F(\w_t) - F_k(\w_t)| < \epsilon_t$.
This method ensures that updates contributing to the global model do not diverge significantly from the overarching learning objective, which is determined by the priority clients. By strategically aligning the updates from non-priority clients with the global learning objective, FedALIGN can effectively harness the collective learning power of the entire network, while maintaining focus on the primary learning task.

In practice, during a communication round, along with the model parameter $\w_t$, the server sends the value of accuracy on the global data. The non-priority clients send back updates, only if the accuracy of the received model on it's local data is high enough; i.e., only if $F_k(\w_t) \leq F(\w_t) + \epsilon_t$. This way the the non-priority client is incentivized to participate in the aggregation in that round, if it obtains a good model that produces a high accuracy on its local data. The server then decides to aggregate it to the global model if it is well aligned, based on how close the accuracies are; i.e., if the full condition $|F(\w_t) - F_k(\w_t)| \leq \epsilon_t$ is satisfied.

\textbf{On the willingness of non-priority clients to participate.} We argue that the receipt of a well-performing global model provides a compelling incentive for non-priority clients to contribute to the aggregation. In our supplementary material, we experimentally show numerous instances where the globally trained model outperforms locally trained counterparts. During instances where the received model does not surpass the performance of a locally trained model, primarily due to the latter's fine-tuned adaptation to specific client characteristics, we posit that globally trained models help refine local models. This concept of leveraging external data for local fine-tuning has been the subject of extensive research in Personalized FL \cite{Tan2022}. Furthermore, we assert that access to a global model that demonstrates satisfactory performance on local client data is invariably beneficial. It contributes to the suite of tools available to the local client to potentially improve overall performance.


\subsection{Convergence Analysis}

Before stating the main theoretical result on the convergence of FedALIGN, we present key assumptions and notation.
We make the following assumptions on local objective functions $F_1,\dots,F_n$.
Assumptions~\ref{assump:smooth} and \ref{assump:convex} are standard in optimization,
while Assumptions~\ref{assump:unbiased} and \ref{assump:nodiv} are standard in distributed learning and FL literature~ \cite{li2020convergence,Cho2020,zhang2013comunicationefficient,stich2019local}.

\begin{assumption}\label{assump:smooth} (Smoothness)
For each $f\in\{F_1,\dots,F_N\}$, $f$ is $L$-smooth, that is, for all $\mathbf{u}$ and $\mathbf{v}$, $f(\mathbf{v}) \leq f(\mathbf{u}) + (\mathbf{u} - \mathbf{v})^{T}\nabla f(\mathbf{v}) + \frac{L}{2}\|\mathbf{u} - \mathbf{v}\|^2$.
\end{assumption}

\begin{assumption}\label{assump:convex}  (Convexity)
For each $f\in\{F_1,\dots,F_N\}$, $f$ is $\mu$-strongly convex, that is, for all $\mathbf{u}$ and $\mathbf{v}$, $f(\mathbf{v}) \geq f(\mathbf{u}) + (\mathbf{u} - \mathbf{v})^{T}\nabla f(\mathbf{v}) + \frac{\mu}{2}\|\mathbf{u} - \mathbf{v}\|^2$.
\end{assumption}

\begin{assumption}\label{assump:unbiased} (Unbiased and Bounded Variance of Gradients)
For a mini-batch $\xi_k$ randomly sampled from $B_k$ from user $k$, the SGD is unbiased; i.e., $\mathbb{E}[\nabla F_k(\w_k, \xi_k)] = \nabla F_k(w_k)$. Also, the variance of random gradients is bounded: $\mathbb{E}[\lVert \nabla F_k(\w_k, \xi_k) - \nabla F_k(\w_k) \rVert^2] \leq \sigma^2$ for $k = 1, \dots, N$.
\end{assumption}
Note that in general we can assume that each client has a distinct variance bound $\sigma_k$. We consider this in the supplementary material, while making a simplifying assumption here for brevity.

\begin{assumption}\label{assump:nodiv} (Non-Divergent Gradients)
The stochastic gradient's expected squared norm is uniformly bounded, i.e., $\mathbb{E}[\lVert \nabla F_k(\w_k, \xi_k) \rVert^2] \leq G^2$ for $k = 1, \ldots, N$.
\end{assumption}



As in \eqref{eq:gamma}, we define $\Gamma := F^{*} - \sum_{k \in \Prio} p_k F_k^{*}$ to be the heterogeneity parameter, but here only defined for the priority clients $\Prio$.
A larger $\Gamma$ implies a higher degree of heterogeneity. 
When $\Gamma = 0$, the local objectives and global objectives are consistent.
Recall that the index $t$ counts the number of local updates, and each client runs $E$ local updates before sending the updates when $t\pmod E = 0$.

We also define $\Gamma_k:= F_k(\w^*) - F_k^*$. Here, $\Gamma_k$ is the objective gap between the optimal global model and the optimal local model.
For a time $t$, we also define
$$\tau(t) := \max \{ t^{\prime} \,:\, t^{\prime}\pmod E = 0,  t^{\prime}\leq t\}$$
as the time at which the last communication round occurred.




Here we present the convergence result for FedALIGN in the full participation scenario and compare it to existing results. 
Suppose FedALIGN runs for $T$ iterations, for a $T$ such that $T \pmod E = 0$ and outputs $\w_T$ as the global model.

\begin{theorem}\label{thm:conv} (Convergence) Suppose that Assumptions~\ref{assump:smooth}, \ref{assump:convex}, 
\ref{assump:unbiased}, and \ref{assump:nodiv} hold, and we set a decaying learning rate $\eta_t = \frac{2}{\mu(t + \gamma)}$, where $\gamma = \max{\left\{\frac{8L}{\mu},E\right\}}$ 
and any $\epsilon_t \geq 0$.
The expected error following $T$ total local iterations (equivalent to $ T/E$ 
communication rounds) for \textnormal{FedALIGN}
under full device participation satisfies 
\begin{align}
    \mathbb E[F(\w_{T})] - F^* \leq \frac{1}{T+\gamma} \left(C_1 + C_2 \theta_T\Gamma\right)  + \rho_T,
\end{align}
where $\theta_T \in [0,1]$ and $\rho_T$ can be adjusted by selecting an appropriate $\epsilon_t$ 
in each time step ($\rho_T$ and $\theta_T$ are defined after the theorem statement). Here, $C_1$ and $C_2$ are constants given by
\begin{align}
    C_1 := \frac{2L}{\mu^2}\left(\sigma^2 + 8(E-1)^2G^2\right) + \frac{4L^2}{\mu}\|\w_0 - \w^*\|^2, \quad 
    C_2 := \frac{12L^2}{\mu^2}. \nonumber
\end{align}
\end{theorem}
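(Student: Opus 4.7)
The plan is to follow the virtual-averaging proof template of Li et al.\ (2020) for FedAvg and localize all modifications to the aggregation step, since FedALIGN only changes \emph{which} clients contribute to a given round. I would first fix notation by introducing, for each local iteration $t$, a virtual sequence $\bar\w_t := \sum_{k \in \Prio} p_k \w_t^k$ averaged only over priority clients, together with an actual aggregated sequence $\w_t$ that incorporates the selected set $\mathcal{S}_t \subseteq [N]\setminus\Prio$ at communication rounds. Between communication rounds, $\w_t$ and $\bar\w_t$ evolve identically, so the analysis only needs to account for the discrepancy injected at times $t$ with $t \pmod E = 0$. I would then aim to reproduce the three core lemmas from the FedAvg analysis: (i) a one-step contraction of $\mathbb{E}\|\bar\w_{t+1}-\w^*\|^2$ using smoothness and strong convexity, (ii) a bound on the stochastic gradient variance yielding an $\sigma^2/|\Prio|$-type term, and (iii) a bound on the local-drift $\mathbb{E}\sum_{k \in \Prio} p_k\|\bar\w_t - \w_t^k\|^2 \lesssim (E-1)^2 G^2 \eta_t^2$.

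The main new ingredient is the descent lemma, which must absorb the selection rule $|F(\w_t)-F_k(\w_t)|\le \epsilon_t$. The standard FedAvg analysis produces a $\Gamma$ term through the inequality $\sum_{k\in\Prio} p_k F_k^*(\cdot) \le F^* - \Gamma$ applied inside the cross-term $\mathbb{E}\langle \bar\w_t-\w^*,\sum_k p_k \nabla F_k(\bar\w_t)\rangle$. My plan is to split the aggregated gradient at a communication step into a priority part, which contributes the usual $\Gamma$ term, and a non-priority part $\sum_{k\in\mathcal{S}_t} q_k^{(t)} \nabla F_k(\bar\w_t)$. For the non-priority part, I would use strong convexity together with the threshold condition to write $F_k(\w^*) \le F(\w^*) + \epsilon_t = F^* + \epsilon_t$, turning each included client's "gap" $\Gamma_k$ into $\epsilon_t$ plus a controlled residual. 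The net effect replaces $\Gamma$ by a weighted version $\theta_t \Gamma$, where $\theta_t \in [0,1]$ decreases as more non-priority clients pass the threshold, and introduces an additive $O(\epsilon_t)$ term per round that will aggregate into $\rho_T$.

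The recursion I expect to derive has the form
\begin{equation}
\mathbb{E}\|\bar\w_{t+1}-\w^*\|^2 \le (1-\mu\eta_t)\mathbb{E}\|\bar\w_t-\w^*\|^2 + \eta_t^2 B + \eta_t \cdot c\, \theta_t \Gamma + \eta_t \cdot c'\epsilon_t,
\end{equation}
with $B$ absorbing the variance and drift terms. Setting $\eta_t = 2/(\mu(t+\gamma))$ and unrolling by induction on $t$ gives the standard $1/(T+\gamma)$ rate from Li et al., with the time-averaged coefficients $\theta_T := \sum_t \eta_t \theta_t / \sum_t \eta_t$ and $\rho_T := c'' \sum_t \eta_t \epsilon_t / (T+\gamma)$ (the exact weighting will drop out of the telescoping sum). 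A final application of $L$-smoothness converts the distance bound $\mathbb{E}\|\bar\w_T - \w^*\|^2$ to the function-value bound $\mathbb{E}[F(\w_T)] - F^*$, yielding the factor $L$ inside $C_1,C_2$. Setting $\epsilon_t = 0$ forces $\mathcal{S}_t = \emptyset$ (almost surely, for strictly heterogeneous clients), recovering $\theta_T = 1$, $\rho_T = 0$, and the FedAvg rate.

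The hard part will be the second paragraph: correctly bounding the perturbation from the non-priority clients without assuming anything about their gradient norms beyond Assumption~\ref{assump:nodiv}. The threshold is on \emph{function values}, but the recursion needs to control \emph{gradient} deviations at $\bar\w_t$; the bridge must be built from $\mu$-strong convexity and $L$-smoothness, which will inflate the constant in front of $\epsilon_t$ but should not affect the rate. A secondary subtlety is that $\mathcal{S}_t$ is itself random (it depends on $\w_t$), so the descent step must be conditioned on the filtration up to $t$ before applying the threshold bound, and the definitions of $\theta_T, \rho_T$ should be expectations under that randomness.
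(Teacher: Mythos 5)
Your overall scaffolding (virtual sequence, drift/variance lemmas, one-step contraction, induction with $\eta_t = 2/(\mu(t+\gamma))$, final $L$-smoothness step) matches the paper's, and your intuition that the $\Gamma$ coefficient shrinks as more non-priority clients are included is correct --- though the paper realizes it differently: it defines the virtual iterate with \emph{renormalized} weights $p'_k = p_k/(1+\sum_{k\notin\Prio}p_k I_{k,\tau(t)})$ over \emph{all} included clients (priority and selected non-priority, with the indicator frozen at the last communication time), so $\theta_T$ falls out as $\mathbb E[1/(1+\sum_{k\notin\Prio}p_k I_{k,\tau(t)})]$ averaged over time, rather than from a separate perturbation argument at communication rounds. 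Your priority-only virtual sequence is not self-contained, since every aggregation resets all clients to a model that already contains non-priority contributions, so you would have to re-inject that discrepancy into the recursion at every round; this is workable but messier than the paper's single renormalized sequence.

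The genuine gap is in your treatment of the non-priority bias. You propose to use the threshold to write $F_k(\w^*) \le F(\w^*) + \epsilon_t$, i.e.\ to convert each included client's gap into $\epsilon_t$ plus a residual. But the selection rule is $|F_k(\w_t) - F(\w_t)| < \epsilon_t$ evaluated at the \emph{current iterate} $\w_t$, not at $\w^*$; nothing in the assumptions lets you transport that bound to the optimum, since $F_k(\w^*) - F(\w^*)$ and $F_k(\w_t) - F(\w_t)$ can differ by quantities of order $\|\w_t - \w^*\|$ that do not vanish. Consequently your claimed form $\rho_T \sim \sum_t \eta_t \epsilon_t/(T+\gamma)$ is not justified. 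The paper never makes this conversion: its bias term is $\rho_T = \frac{2L}{\mu(T+\gamma-2)}\sum_i \mathbb E\bigl[\sum_{k\notin\Prio}p_k I_{k,\tau(i)}\Gamma_k/(1+\sum_{k\notin\Prio}p_k I_{k,\tau(i)})\bigr]$ with $\Gamma_k := F_k(\w^*) - F_k^*$ left intact, arising from the cross-term $2\eta_t\sum_{k\notin\Prio}p'_k(F_k(\w^*) - F_k^*)I_{k,\tau(t)}$ in the one-step bound. The threshold $\epsilon_t$ controls $\rho_T$ only \emph{indirectly}, through the indicator: shrinking $\epsilon_t$ excludes clients and zeroes their $\Gamma_k$ contributions (in particular $\epsilon_t = 0$ gives $\rho_T = 0$), but it does not bound $\Gamma_k$ itself. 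You would need to either adopt the paper's formulation of $\rho_T$ or add an assumption linking function-value proximity at $\w_t$ to proximity at $\w^*$.
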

Here the terms $\theta_T$ and $\rho_T$ are defined as 
\begin{align}
     \theta_T = \frac{1}{T + \gamma-2} \sum_{i=1}^{T-1}\mathbb E \left[\frac{1}{1 + \sum_{k \notin \Prio}p_k I_{k,\tau{(i)}}}\right],
\end{align}
and 
\begin{align}
 \rho_T := \frac{2L}{\mu(T+\gamma-2)}\sum_{i=1}^{T-1}\mathbb E\left[\frac{\sum_{k\notin P}p_kI_{k,\tau(i)}\Gamma_k}{{1 + \sum_{k\notin \Prio}p_k I_{k,\tau(i)}}}\right].
\end{align}

This theorem is proved in the supplementary material. We now define and explain some of the important constants we introduce in the result.

\textbf{Alignment of non-priority clients $\theta_T$.} The expression for $\theta_T$ is
\begin{align}
    \theta_T &:= \frac{1}{T + \gamma-2} \sum_{i=1}^{T-1}\mathbb E \left[\frac{1}{1 + \sum_{k \notin \Prio}p_k I_{k,\tau{(i)}}}\right],
\end{align}
where $I_{k,t} := \mathbf{1}{\{|F_{k} \left(\w_t^{k}\right) - F\left(\w_t^{k}\right)| < \epsilon_t\}}$ is an indicator random variable for  whether a non-priority client is included in a given communication round. The alignment of non-priority clients, $\theta_T$, 
effectively measures the average inclusion of non-priority clients over $T-1$ time steps. 
Lower values of $\theta_T$ correspond to a greater number of clients being included in aggregation. Specifically the convergence rate improves to $O(\theta_T/T)$ in each time step. This kind of improvement due to client selection strategies has been considered before \cite{Cho2020}.

\textbf{Tunable bias $\rho_T$.} The expression for $\rho_T$ is
\begin{align}
    \rho_T &:= \frac{2L}{\mu(T+\gamma-2)}\sum_{i=1}^{T-1}\mathbb E\left[\frac{\sum_{k\notin P}p_kI_{k,\tau(i)}\Gamma_k}{{1 + \sum_{k\notin \Prio}p_k I_{k,\tau(i)}}}\right].
\end{align}
This term is an average aggregation of the bias introduced in the first $T-1$ time steps and the added bias at time step $T$. 

The parameters $\theta_T$ and $\rho_T$
effect the convergence of the learning model under consideration, in opposite ways. The choice of $\epsilon_t$ at each time step plays a crucial role in controlling this balance. If $\epsilon_t$ is set too low, it risks under-utilizing the potential of well-aligned clients, thereby causing a slower rate of convergence. On the other hand, if $\epsilon_t$ is set too high, it introduces a high bias in the aggregated model updates, which can negatively impact the model's performance.

\textbf{Fine-tuning $\epsilon_t$.} The above dichotomy highlights the fine-tune \cite{cheng2021federated} aspect of the FedALIGN algorithm. It allows for fine-tuning of the bias through the careful selection of $\epsilon_t$ at each time step. We can thus set $\epsilon_t$ such that it is gradually reduced in the later rounds. This strategy ensures that the algorithm initially benefits from the contributions of a larger number of clients (albeit potentially introducing some bias), but as the rounds progress and the model is closer to convergence, the bias can gradually be eliminated by reducing $\epsilon_t$.
In this way, the fine-tune effect provides a mechanism for controlling the trade-off between bias and convergence speed. It enables the algorithm to leverage the power of well-aligned clients to expedite convergence initially, and then gradually reduces the bias to ensure accurate model learning. 


\textbf{Consistency of our result with FedAvg on priority clients}. 
If we set $\epsilon_t = 0$ for all time steps in FedALIGN,
$\theta_T$ (the alignment of non-priority clients) becomes $1$, implying that all priority clients are included in the aggregation. At the same time, $\rho_T$ (the tunable bias) becomes $0$, indicating that no bias is introduced in the model updates. 
Under these conditions, FedALIGN is identical to FedAvg on just the priority clients, and the convergence bound is precisely the result in \cite{li2020convergence}. It shows how FedALIGN builds upon FedAvg by introducing additional flexibility and control through the $\epsilon_t$ parameter, which allows us to manage the trade-off between bias and speed of convergence. 
We also highlight that we can modify other algorithms like FedPROX \cite{Kairouz2019} to obtain a similar control. We show experimental results in our supplementary material to validate this.






\section{Experiments}
\label{sec:experiments}

We conduct experiments on various benchmark datasets: FMNIST, \emph{balanced} EMNIST and CIFAR-10 \cite{Xiao17fmnist}. The datasets were preprocessed following the methods developed in \cite{BrendanMcMahan2017}. For FMNIST we use a logistic regression model, for EMNIST we use a two-layer neural network and for the CIFAR-10 dataset we use a Convolutional Neural Network. The dataset is generated by distributing uni-class shards to each client, following the steps in \cite{BrendanMcMahan2017}.
More details about the architectures are deferred to the supplementary material.
In addition, in order to assess the effect of noisy and irrelevant data in the non-priority clients, we modify the \textsc{Synth}$(\alpha,\beta)$ dataset considered in \cite{Li2018}. We use a logistic regression model for this experiment.

We evaluate FedALIGN (built on FedAvg) against two baselines: (1) FedAvg on only the priority clients and (2) FedAvg on all clients. Further in our supplementary material we consider FedALIGN built on FedPROX \cite{Li2018}. We consider the full participation case, where all clients are sent a global model by the server and consider the case of partial participation (where only a randomly sampled subset of clients are chosen in each round) in the supplementary material. 

\textbf{Experiment results on benchmark data}. In the design of our experiment, we considered a total of $N = 60$ clients, with $2$ clients as priority clients and the remaining as non-priority clients. The rationale behind this choice was to create significant heterogeneity between the priority and non-priority client groups, while ensuring diversity within the non-priority group. Specifically, the skewed assignment of classes leads to this significant heterogeneity.
We further explore alternative configurations of priority client selections with varying data fractions in the supplementary material.
For the learning rate ($\eta$), we adopted a value of $0.1$ for both FMNIST and EMNIST datasets, whereas for CIFAR-10, we utilized a lower learning rate of $0.01$. The selection of these learning rates was made after conducting a grid search of different learning rates on small sets of the data. The selected learning rates resulted in stable convergence and exhibited the best performance.
All experiments were conducted with 5 different seeds for randomness, setting the local epoch $E = 5$. We expand on the impact of varying epoch choices in the supplementary material. The initial $10 \%$ of communication rounds were dedicated as warm-up rounds during which only priority clients contributed to the aggregation process. This design was to ensure the development of a reasonably working model prior to the inclusion of non-priority clients.
The selection parameter $\epsilon$ was set to $0.2$, and we found that for the datasets considered, there was no need to fine-tune this parameter to $0$. We hypothesize that this may be due to the strong alignment of the clients chosen in each round.
The results depicted in Figure~\ref{fig:benchmark} show the superiority of FedALIGN over the baseline methods. This is evident in the post-warm-up phase, where there is a noticeable acceleration in convergence rates. Interestingly, it was observed that the inclusion of all clients often led to a degradation in overall performance, further affirming the importance of selecting well-aligned clients.

\begin{figure}[htb!]
\centering
\includegraphics[width=1\linewidth]{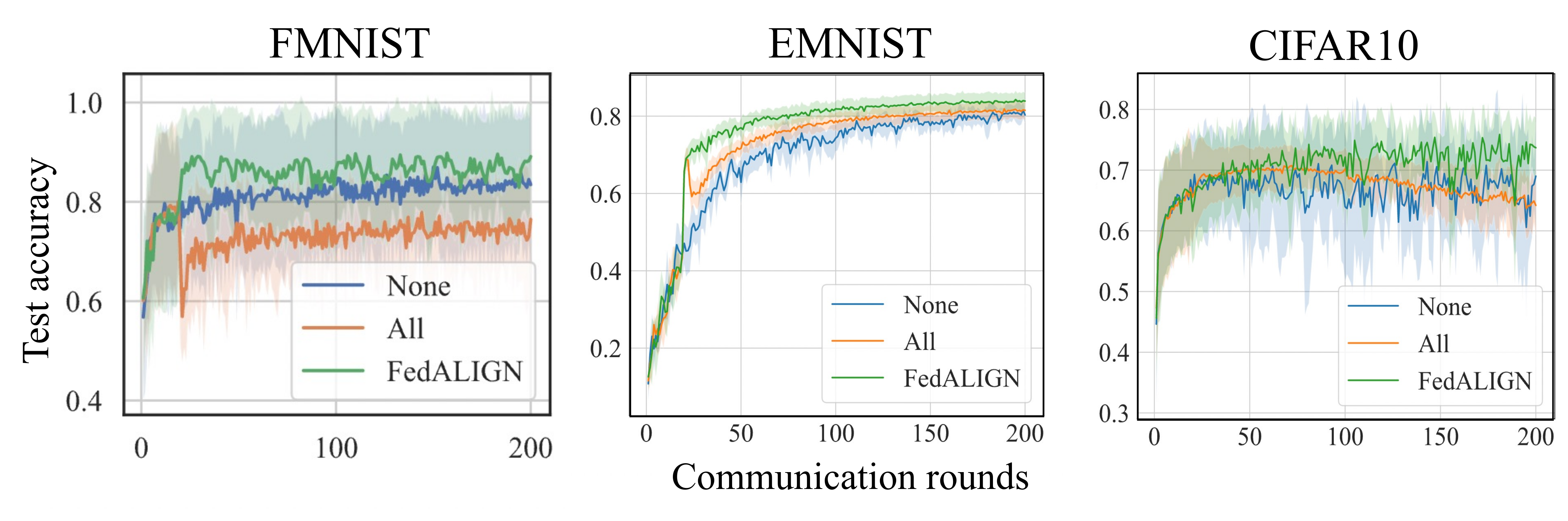}
\caption{\label{fig:benchmark}
Test accuracy for benchmark datasets, under full participation, with $2$ heterogeneous priority clients and $N = 60$ clients, with $E =5$. After the first $20$ warm-up rounds, a clear increase in convergence rate is observed for FedALIGN (green), while a deterioration is observed for All (orange) (FedAvg with all clients). FedALIGN also achieves a higher final accuracy.}
\end{figure}

\textbf{Experimental results on synthetic data.} 
We simulate scenarios to analyze the performance of non-priority clients with varying degrees of alignment. These scenarios are designed to replicate conditions where clients are affected by noisy and irrelevant data, leading to misalignment caused by factors other than the skewed class assignment that we previously considered. For this purpose, we utilize the \textsc{Synth}$(1,1)$ dataset as referred to in \cite{Li2018}.
We consider $N = 20$, with $10$ priority and $10$ non-priority clients, since \textsc{Synth}(1,1) inherently generates very heterogenous models.
 We create a global dataset and distibute this data to non-priority clients and add two forms of noise: (1) Discrepancies in label assignments for data points, and (2) the addition of irrelevant data points, which are generated by independent distributions.
We configure the noise to ensure that the non-priority clients experience varying degrees of noise, thereby influencing their alignment with the objective. The detailed methodology for the generation of this noise is provided in the supplementary materials.
We apply a logistic regression model to this data, maintaining a learning rate of $0.1$, with local models used to find the best and stable learning rates, as in the prior experiments. 
We consider $3$ different average noise/discrepancy levels across non-priority clients: medium, low and high.
We found that $\epsilon = 0.2$ is the best choice for the selection parameter for low and medium noise, while $\epsilon = 0.4$ is the best choice for the high noise experiment.
From Figure~\ref{fig:synth}, it becomes evident that FedALIGN consistently outperforms baselines, under this more complicated form of misalignment in the non-priotity clients. This underscores the robustness of FedALIGN even in challenging conditions. Our supplementary material considers various different choices of $\alpha,\beta$ to generate different kinds of \textsc{Synth}($\alpha,\beta$) data, as well as different skews on the number of data points and noise.
\clearpage

\begin{figure}[htb!]
\centering
\includegraphics[width=1\linewidth]{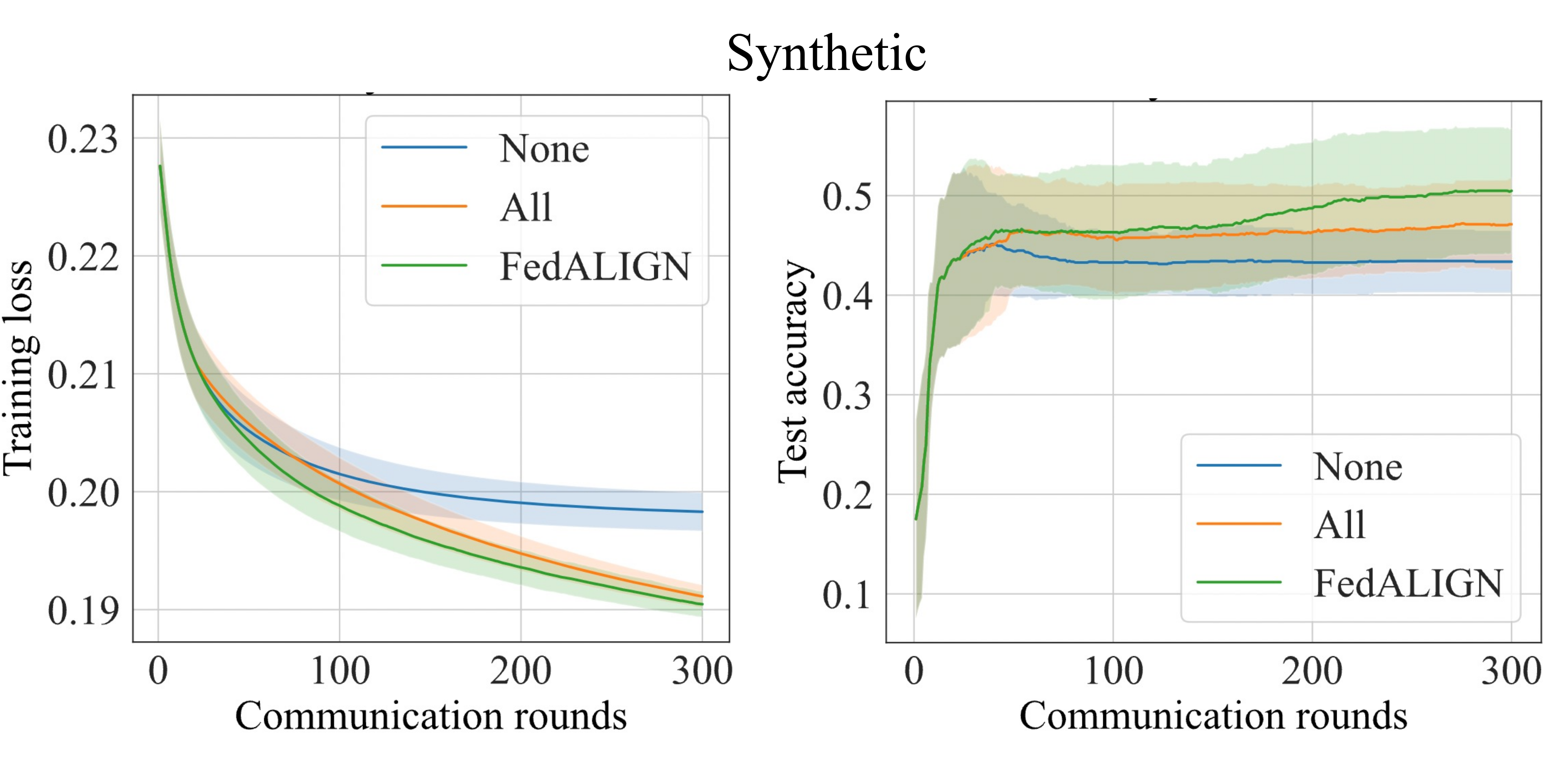}
\includegraphics[width=1\linewidth]{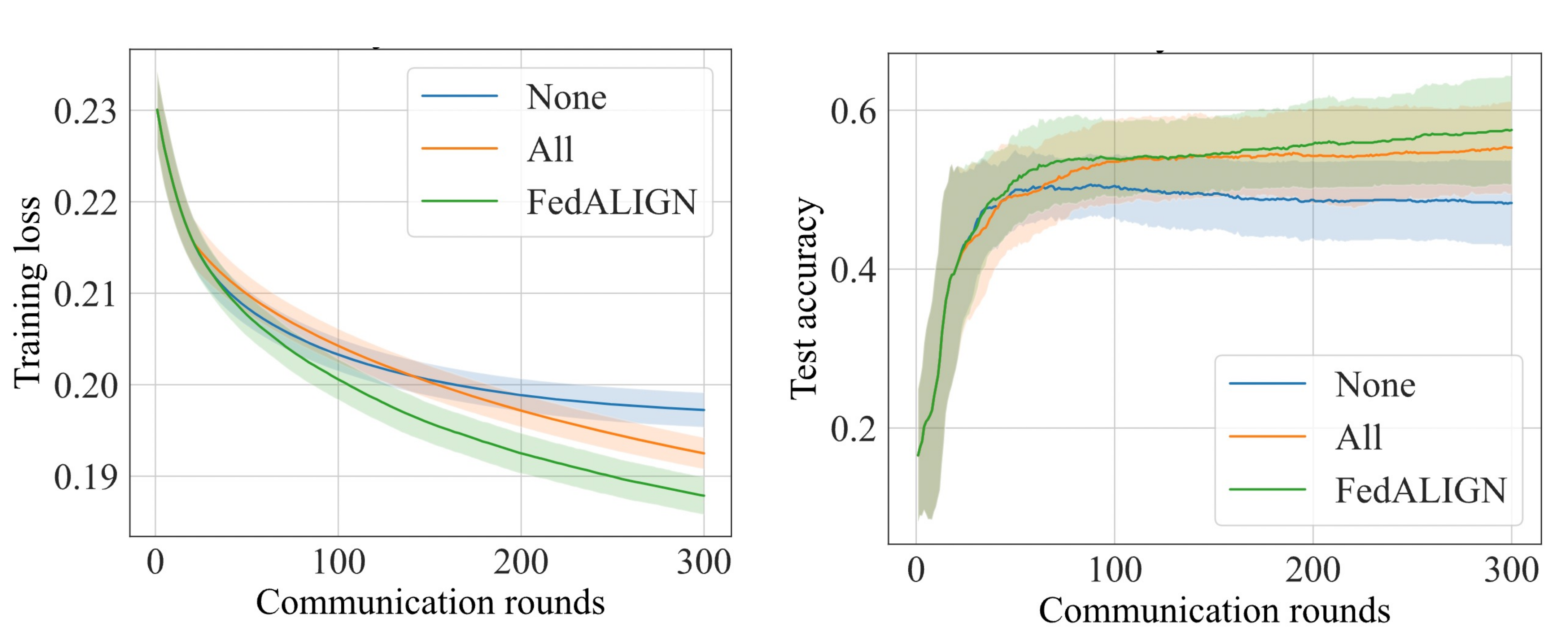}
\includegraphics[width=1\linewidth]{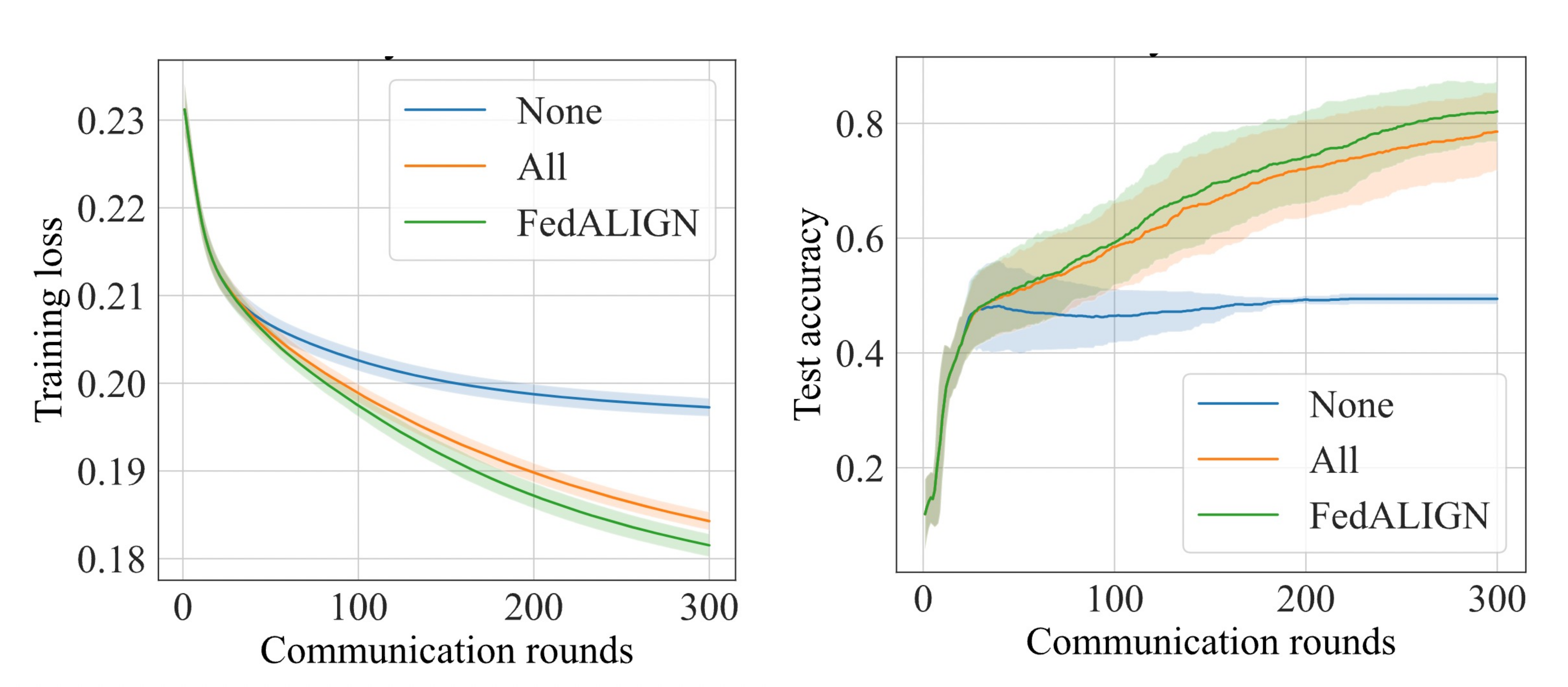}
\caption{\label{fig:synth}
Training loss and test accuracy for the synthetic dataset \textsc{Synth}(1,1), under full participation, with $10$ heterogeneous priority clients and $N = 20$ clients, with $E =5$. The first graph considers the case where there is medium noise, the second considers the low noise case and the third considers the high noise case.
After the first $20$ warm-up rounds, a clear improvement is observed for FedALIGN (green) for all these cases. The level of alignment of the non-priority clients is complicated in these experiments, where the alignment of non-priority clients in general depends on data distribution, noise and discripencies in a complex way} 
\end{figure}

\section{Concluding Remarks}
\label{sec:conclusion}

We proposed and examined Prioritized Federated Learning, a FL setting that models the real-world situation where some clients 
are given priority from the point of view of model training, but other clients may choose to participate in the federation as well.
We introduce an algorithm called FedALIGN that provides a selection rule that can be combined with FL algorithms such as FedAvg and FedPROX. 
FedALIGN smartly selects non-priority clients to help achieve the goals of priority clients. This creates a win-win situation where non-priority clients that align well with the system obtain a strong global model that can enhance their local tasks.
The analysis of convergence of FedALIGN highlights an important trade-off: using more priority clients can speed up convergence but introduces a bias. We discuss how this can be adjusted over time to eventually eliminate this bias in later communication rounds. Our extensive experiments support these findings.
Potential directions for future research include considering different criteria to decide which non-priority client updates to include in each step, and utilizing non-priority clients to align with specific priority clients' objective, rather than just the global objective. 
\medskip

\newpage

\bibliographystyle{unsrtnat}
\bibliography{main}

\newpage
\appendix
\counterwithin{thmlemma}{section}

\section{Proof of Theorem~1}

\begin{theorem}\label{thm:conv_new} (Convergence) Suppose that Assumptions~\ref{assump:smooth}, \ref{assump:convex}, 
\ref{assump:unbiased}, and \ref{assump:nodiv} hold, and we set a decaying learning rate $\eta_t = \frac{2}{\mu(t + \gamma)}$, where $\gamma = \max{\left\{\frac{8L}{\mu},E\right\}}$ 
and any $\epsilon_t \geq 0$.
The expected error following $T$ total local iterations (equivalent to $ T/E$ 
communication rounds) for \textnormal{FedALIGN}
under full device participation satisfies 
\begin{align}
    \mathbb E[F(\w_{T})] - F^* \leq \frac{1}{T+\gamma} \left(C_1 + C_2 \theta_T\Gamma\right)  + \rho_T,
\end{align}
where $\theta_T \in [0,1]$ and $\rho_T$ can be adjusted by selecting an appropriate $\epsilon_t$ 
in each time step ($\rho_T$ and $\theta_T$ are defined after the theorem statement). Here, $C_1$ and $C_2$ are constants given by
\begin{align}
    C_1 := \frac{2L}{\mu^2}\left(\sigma^2 + 8(E-1)^2G^2\right) + \frac{4L^2}{\mu}\|\w_0 - \w^*\|^2, \quad 
    C_2 := \frac{12L^2}{\mu^2}. \nonumber
\end{align}
\end{theorem}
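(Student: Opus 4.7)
The plan is to adapt the virtual-iterate framework of~\cite{li2020convergence} to an effective, time-varying global objective that encodes which non-priority clients were selected in the most recent communication round. For each aggregation time $\tau$, let $S_\tau := 1 + \sum_{k \notin \Prio} p_k I_{k,\tau}$ and define effective weights $q_{k,\tau} := p_k/S_\tau$ for $k \in \Prio$ and $q_{k,\tau} := p_k I_{k,\tau}/S_\tau$ for $k \notin \Prio$; by construction $\sum_k q_{k,\tau} = 1$. The associated virtual iterate is $\bar{\w}_t := \sum_k q_{k,\tau(t)} \w_t^k$, which coincides with the broadcast global model right after each aggregation. Each $\w_t^k$ still evolves by local SGD on its own $F_k$, so expanding $\mathbb{E}\|\bar{\w}_{t+1} - \w^*\|^2$ proceeds along the same lines as FedAvg, yielding the familiar SGD-variance contribution of order $\eta_t^2\sigma^2$ from Assumption~\ref{assump:unbiased} and the local-drift contribution of order $\eta_t^2(E-1)^2 G^2$ from Assumption~\ref{assump:nodiv}.

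The critical modification appears in the cross term $-2\eta_t \sum_k q_{k,\tau(t)}\langle \bar{\w}_t - \w^*, \nabla F_k(\w_t^k)\rangle$. Applying $\mu$-strong convexity of each $F_k$ (Assumption~\ref{assump:convex}) separately produces a quantity analogous to FedAvg's $\eta_t\Gamma$ term, except that the weights are $q_{k,\tau(t)}$ rather than $p_k$. The key algebraic identity that drives the split in the theorem is
\begin{align}
\tilde{\Gamma}_{\tau(t)} := \sum_k q_{k,\tau(t)}\bigl(F_k(\w^*) - F_k^*\bigr) = \frac{\Gamma}{S_{\tau(t)}} + \frac{1}{S_{\tau(t)}}\sum_{k\notin\Prio} p_k I_{k,\tau(t)}\,\Gamma_k,
\end{align}
where the first equality uses $\sum_{k\in\Prio} p_k F_k(\w^*) = F^*$ since $\w^*$ minimizes the priority-only $F$, and the second splits the priority and non-priority contributions via $\Gamma_k = F_k(\w^*) - F_k^*$. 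After averaging $\tilde{\Gamma}_{\tau(i)}$ over $i = 1,\ldots,T-1$ (weighted as dictated by the $\eta_t = 2/(\mu(t+\gamma))$ schedule and the telescoping recursion), the first piece aggregates into $\theta_T\Gamma$ and the second into $\rho_T$, up to the factors $L/\mu$ and $L^2/\mu^2$ that appear when unrolling the recursion.

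From this point the proof closes exactly as in~\cite{li2020convergence}: set $\gamma = \max\{8L/\mu, E\}$, invoke the standard induction to obtain $\mathbb{E}\|\bar{\w}_t - \w^*\|^2 \leq V_t/(t+\gamma)$ for an aggregated $V_t$ that splits into an initial-condition contribution (giving the $\|\w_0 - \w^*\|^2$ piece of $C_1$), a noise/drift contribution (giving the $\sigma^2 + 8(E-1)^2G^2$ piece), and a heterogeneity contribution (giving $C_2\theta_T\Gamma + \rho_T$). Finally, apply $L$-smoothness of $F$ at $\w^*$ and use the fact that at a communication step $\w_T = \bar{\w}_T$ to transfer the iterate bound into the function-value bound, which produces the claimed $C_1$ and $C_2$.

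The main obstacle is twofold. First, the aggregation weights $q_{k,\tau(t)}$ are random and piecewise constant in $t$, jumping at each communication round when a fresh selection $\{I_{k,\tau}\}$ is realized; the telescoping must therefore be done conditionally on the history so that the $1/S_{\tau(i)}$ factors and the products $p_k I_{k,\tau(i)}\Gamma_k$ land in separate averages of the precise form given for $\theta_T$ and $\rho_T$. Second, because $\w^*$ minimizes only the priority-only objective $F$ rather than the effective $\tilde{F}_{\tau(t)}$, one must verify that applying strong convexity of each $F_k$ individually still yields a clean $(1-\mu\eta_t)\|\bar{\w}_t - \w^*\|^2$ contraction with only the additive correction $\tilde{\Gamma}_{\tau(t)}$; it is exactly this mismatch that injects the tunable bias $\rho_T$, and controlling it so that $\epsilon_t = 0$ recovers the FedAvg bound of~\cite{li2020convergence} is the final bookkeeping hurdle.
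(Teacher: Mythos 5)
Your proposal follows essentially the same route as the paper's proof: your effective weights $q_{k,\tau}$ are exactly the paper's renormalized fractions $p'_k$ (with the indicator absorbed), the virtual iterate and the one-step recursion with the $\eta_t^2\sigma^2$ and $\eta_t^2(E-1)^2G^2$ contributions match the paper's Lemma~\ref{lem:onestepsgd}, and your identity splitting $\sum_k q_{k,\tau}\bigl(F_k(\w^*)-F_k^*\bigr)$ into $\Gamma/S_\tau$ plus $S_\tau^{-1}\sum_{k\notin\Prio}p_kI_{k,\tau}\Gamma_k$ is precisely the paper's separation into the terms $E_t$ (which aggregates to $\theta_T\Gamma$) and $D_t$ (which aggregates to $\rho_T$), followed by the same induction with $\gamma=\max\{8L/\mu,E\}$ and a final application of $L$-smoothness. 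The one step your sketch leaves implicit --- though your $L^2/\mu^2$ versus $L/\mu$ factors show you see it --- is that the priority piece must be pushed down to order $\eta_t^2$ via the cancellation $-\gamma_t+2\eta_t=4L\eta_t^2$ (which uses $\sum_{k\in\Prio}p_kF_k(\w_t)\ge F^*$), a cancellation unavailable for the non-priority piece; that asymmetry is exactly why $\theta_T\Gamma$ lands inside the $1/(T+\gamma)$ factor while $\rho_T$ survives as a non-vanishing bias.
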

Here the terms $\theta_T$ and $\rho_T$ are defined as 
\begin{align}
     \theta_T = \frac{1}{T + \gamma-2} \sum_{i=1}^{T-1}\mathbb E \left[\frac{1}{1 + \sum_{k \notin \Prio}p_k I_{k,\tau{(i)}}}\right],
\end{align}
and 
\begin{align} \label{eq:correct_rho}
 \rho_T := \frac{2L}{\mu(T+\gamma-2)}\sum_{i=1}^{T-1}\mathbb E\left[\frac{\sum_{k\notin P}p_kI_{k,\tau(i)}\Gamma_k}{{1 + \sum_{k\notin \Prio}p_k I_{k,\tau(i)}}}\right].
\end{align}

We analyze FedALIGN in the full device participation setting here. The proof skeleton here follows \cite{Cho2020}, but considering a selection criteria for non-priority clients in each communication round requires novel analysis.


\subsection{Some definitions and preliminaries}
First we define the \emph{re-normalized} data fraction as 
\begin{align}
p'_{k}(t) := \frac{p_k}{1 + \sum_{k \notin \Prio}p_k I_{k,\tau(t)}}.
\end{align}
Generally, $p'_k(t)$ is a random variable, dependent on $(\epsilon_{\tau(t)},\tau{(t)})$, and can assume a distinct value in each communication round. In the proof, we sometimes express $p'_{k}(t)$ as $p^{\prime}_k$, which is a minor notational liberty.

Note that our aggregation rule is as follows: During every communication round, the global model $\w_{\tau(t)}$ aggregates the weighted sum of the local models, renormalized by total data contributed by all clients included in that communication round. This update is given by
\begin{align}
    \w_{\tau(t)} &\leftarrow \frac{1}{1 + \sum_{k\not\in \Prio}p_kI_{k,\tau(t)}}\left(\sum_{k \in \Prio}p_k\w_{\tau(t)}^k + \sum_{k \notin \Prio}p_k\w_{\tau(t)}^kI_{k,\tau(t)}\right) \nonumber. 
\end{align}
This aggregated expression rewritten in terms of $p^{\prime}_k$ has the following form:
\begin{align}
    \sum_{k \in \Prio}p^{\prime}_k\w_{\tau(t)}^k + \sum_{k \notin \Prio}p^{\prime}_k\w_{\tau(t)}^kI_{k,\tau(t)}.
\end{align}
This motivates us to define a ``virtual'' sequence
\begin{align}
\w_t = \sum_{k \in \Prio}p^{\prime}_k\w_t^k + \sum_{k \notin \Prio}p^{\prime}_k\w_t^kI_{k,\tau(t)},
\end{align}
which intuitively
captures an aggregation (if it happens) in every local round. The value of $\w_t$ is indeed the aggregated global model when $t$ corresponds to a communication round.
Now, define the aggregated stochastic gradients as
\begin{align}
    \g_t = \sum_{k=1}^{N}p'_{k}\nabla F_k(\w^{k}_t,\xi_t)I_{k,\tau(t)},
\end{align}
and the aggregated gradients as 
\begin{align}
     \overline\g_t = \sum_{k=1}^{N}p'_{k}\nabla F_k(\w^{k}_t)I_{k,\tau(t)},
\end{align}
where above, for ease of analysis, we redefine $I_{k,\tau(t)}$ to be 
\begin{align}
    I_{k,\tau(t)} := \mathbf{1}{\{|F_{k} \left(\w_t^{k}\right) - F\left(\w_t^{k}\right)| < \epsilon_t, \text{ or $k$ is the index of a priority client}\}}.
\end{align}
This definition also allows us to rewrite $\w_t$ as
\begin{align}
    \w_t := \sum_{k=1}^Np^{\prime}_k\w_t^kI_{k,\tau(t)}.
\end{align}
Both definitions are equivalent, and we will use either definition as required. We also use $\mathbb E$ to mean expectation over all sources of randomness for a fixed time $t$, while $\mathbb E_{\cdot|p^{\prime}_k,I_{k,\tau(t)}\forall k}$ refers to the conditional expectation, conditioned on random variables $(p^{\prime}_k,I_{k,\tau(t)}\forall k)$.

We now state some preliminary lemmas. We state more directly relevant lemmas in the next section. 
\begin{lemma}\label{lem:l-smooth}(Gradient is $L$-close to its minima)
    If $F_k$ is $L$-smooth 
    with global minimum value $F_k^*$, then for any $\w_k$, we have
    \begin{align}
        \|\nabla F_k(\w_k)\|^2 \leq 2L (F_k(\w_k) - F_k^*).
    \end{align}
\end{lemma}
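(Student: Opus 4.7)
The plan is to establish this classical inequality by combining the $L$-smoothness upper bound with a single optimal descent step. The quantity $F_k(\w_k) - F_k^*$ measures how far $\w_k$ is from achieving the minimum; intuitively, if the gradient at $\w_k$ is large, then a gradient step should decrease the function substantially, and since the function cannot go below $F_k^*$, this bounds the gradient norm.

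More concretely, I would proceed as follows. First, I would apply the $L$-smoothness assumption (Assumption~\ref{assump:smooth}) at the pair $(\mathbf{u}, \mathbf{v}) = (\w_k, \mathbf{y})$ for an arbitrary $\mathbf{y}$, rewriting it in the equivalent standard form
\begin{equation*}
F_k(\mathbf{y}) \leq F_k(\w_k) + (\mathbf{y} - \w_k)^T \nabla F_k(\w_k) + \frac{L}{2}\|\mathbf{y} - \w_k\|^2.
\end{equation*}
Next, I would minimize the right-hand side over $\mathbf{y}$ in closed form by choosing the one-step gradient update $\mathbf{y}^* = \w_k - \frac{1}{L}\nabla F_k(\w_k)$. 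Substituting this value, the two gradient terms combine to give the standard descent inequality
\begin{equation*}
F_k(\mathbf{y}^*) \leq F_k(\w_k) - \frac{1}{2L}\|\nabla F_k(\w_k)\|^2.
\end{equation*}

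Finally, since $F_k^*$ is the global minimum value of $F_k$, we have $F_k^* \leq F_k(\mathbf{y}^*)$. Chaining this with the inequality above and rearranging yields $\|\nabla F_k(\w_k)\|^2 \leq 2L(F_k(\w_k) - F_k^*)$, as desired.

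There is no real obstacle in this argument: it is a textbook consequence of smoothness alone (convexity from Assumption~\ref{assump:convex} is not needed here). The only minor care required is to match the form of the smoothness inequality as stated in Assumption~\ref{assump:smooth}, where the roles of $\mathbf{u}$ and $\mathbf{v}$ are swapped relative to the conventional formulation; a simple relabeling of variables recovers the standard upper bound used in the computation above.
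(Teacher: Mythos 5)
Your proposal is correct and matches the paper's proof essentially verbatim: both apply the smoothness upper bound, plug in the one-step gradient update $\w_k - \frac{1}{L}\nabla F_k(\w_k)$ as the test point, and chain with $F_k^* \leq F_k(\cdot)$ to conclude. Your remark about the swapped roles of $\mathbf{u}$ and $\mathbf{v}$ in Assumption~1 is a fair observation but does not change the argument.
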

\begin{proof}
By definition of $L$-smoothness, we have
\begin{align}
   F_k^*\leq F_k(\vi) \leq F_k(\w_k) + (\vi - \w_k)^T\nabla F_k(\w_k) + \frac{L}{2}\|\vi-\w_k\|^2. \nonumber
\end{align}
Since this is true for any $\vi$, one  can set $\vi = \w_k - (1/L)\nabla F_k(\w_k)$. This gives us
\begin{align}
     F_k^* \leq  F(\w_k)  -\frac1{L}[\nabla F_k(\w_k)]^T\nabla F_k(\w_k) + \frac{1}{2L}\|\nabla F_k(\w_k)\|^2 = F_k(\w_k) - \frac{1}{2L}\|\nabla F_k(\w_k)\|^2, \nonumber
\end{align}
or
\begin{align}
    \|\nabla F_k(\w_k)\|^2 \leq 2L (F_k(\w_k) - F_k^*).
\end{align}
\end{proof}
\begin{lemma}\label{lem:aggreup}(Aggregated model is closer to $\w^*$ than local models) Given local models $\w_t^k$ and the global minima $\w^*$ of $F(\cdot)$,
\begin{align}
    \mathbb E \left[\|\w_t - \w^*\|^2\right] \leq \mathbb E \sum_{k = 1}^Np_k^{\prime} I_{k,\tau(t)} \mathbb E_{.|p_k^{\prime} I_{k,\tau(t)}}\|\w_t^k - \w^*\|^2.
\end{align}
\end{lemma}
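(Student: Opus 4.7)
The plan is to exhibit $\w_t$ as a convex combination of the local iterates $\w_t^k$ and then apply Jensen's inequality to the squared-norm distance from $\w^*$, before handling the randomness in the selection indicators and the re-normalized weights by iterated expectation.

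First, I would check the key algebraic fact that, conditional on the selection variables $\{I_{k,\tau(t)}\}_{k=1}^N$, the weights $\{p'_k(t) I_{k,\tau(t)}\}_{k=1}^N$ form a probability distribution. Indeed, since $p'_k(t) = p_k / (1 + \sum_{k \notin \Prio} p_k I_{k,\tau(t)})$ and $\sum_{k\in \Prio} p_k = 1$, one has
\begin{align}
\sum_{k=1}^N p'_k(t) I_{k,\tau(t)} = \frac{\sum_{k\in\Prio} p_k + \sum_{k\notin\Prio} p_k I_{k,\tau(t)}}{1 + \sum_{k\notin\Prio} p_k I_{k,\tau(t)}} = 1,
\end{align}
using the convention that $I_{k,\tau(t)} = 1$ for priority clients. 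The weights are clearly non-negative.

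Next, using the virtual-sequence definition $\w_t = \sum_{k=1}^N p'_k(t)\,\w_t^k\, I_{k,\tau(t)}$, I can write
\begin{align}
\w_t - \w^* = \sum_{k=1}^N p'_k(t) I_{k,\tau(t)} (\w_t^k - \w^*).
\end{align}
By convexity of $\|\cdot\|^2$ and Jensen's inequality applied to this convex combination,
\begin{align}
\|\w_t - \w^*\|^2 \leq \sum_{k=1}^N p'_k(t) I_{k,\tau(t)} \|\w_t^k - \w^*\|^2.
\end{align}
This step is conditional on the weights and indicators, which are treated as deterministic at this point.

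Finally, I would take expectations on both sides. Using the tower property by conditioning on $(p'_k(t), I_{k,\tau(t)})_{k=1}^N$ and then taking the outer expectation, we obtain
\begin{align}
\mathbb E \|\w_t - \w^*\|^2 \leq \mathbb E \sum_{k=1}^N p'_k(t) I_{k,\tau(t)}\, \mathbb E_{\cdot | p'_k, I_{k,\tau(t)}} \|\w_t^k - \w^*\|^2,
\end{align}
which is the claimed inequality. The only subtle point—and the main thing to verify carefully—is the convex-combination identity above; once that is established, Jensen's inequality and the tower property do the rest. No appeal to smoothness, convexity, or the noise assumptions is needed here; it is purely a consequence of the aggregation rule and the definition of $p'_k(t)$.
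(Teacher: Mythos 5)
Your proof is correct and follows essentially the same route as the paper: both use the identity $\sum_{k=1}^N p'_k I_{k,\tau(t)} = 1$ to rewrite $\w_t - \w^*$ as a convex combination of the $\w_t^k - \w^*$, then apply Jensen's inequality to $\|\cdot\|^2$ and the tower property. Your explicit verification of the convex-combination identity is a welcome addition; the paper only asserts it.
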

\begin{proof}
The lemma follows because
\begin{align}
    &\mathbb E \left[\|\w_t - \w^*\|^2\right] = \mathbb E \left[\left\|\sum_{k=1}^Np_k^{\prime}I_{k,\tau(t)}\w_k^{t} - \w^*\right\|^2\right] \stackrel{(a)}{=} \mathbb E \left[\left\|\sum_{k=1}^Np_k^{\prime}I_{k,\tau(t)}(\w_k^{t} - \w^*)\right\|^2\right] \nonumber \\
    &\leq \mathbb E\sum_{k=1}^np_k^{\prime}I_{k,\tau(t)}I_{k,\tau(t)} \|\w_t^k - \w^{*}\|^2 \leq \mathbb E\sum_{k=1}^np_k^{\prime}I_{k,\tau(t)}\mathbb E_{.|p_k^{\prime} I_{k,\tau(t)}} \|\w_t^k - \w^{*}\|^2,
\end{align}
where $(a)$ is because $\sum_{k=1}^N p_k^{\prime}I_{k,\tau(t)} = 1$, from the definitions of $p_k^{\prime}$ and $I_{k,\tau(t)}$.
\end{proof}
\begin{lemma}\label{lem:avgdisc}(Average discrepancy caused by local updates.) Given Assumption~\ref{assump:nodiv}, for $\eta_{\tau(t)} \leq 2 \eta_t$ and all $t - \tau(t) \leq E - 1$, we have
\begin{align}
        \mathbb E \sum_{k=1}^N p_k^{\prime} \|\w_t^k - \w_t\|^2I_{k,\tau(t)} \leq 4\eta_t^2(E-1)^2G^2.
\end{align}
\end{lemma}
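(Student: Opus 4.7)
The plan rests on two structural facts: (i) by construction, the weights $p_k' I_{k,\tau(t)}$ sum to one over $k \in [1:N]$, so the virtual iterate $\w_t = \sum_{k} p_k' I_{k,\tau(t)} \w_t^k$ is a convex combination of the selected local iterates; and (ii) at the last communication time $\tau(t)$, every selected client shares the common model $\w_{\tau(t)}$, and only $t - \tau(t) \leq E-1$ local SGD steps have taken place since then.

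First I would replace the deviation from the mean with a deviation from this common anchor. Using the standard identity that for weights summing to one,
\begin{align}
\sum_{k=1}^N p_k' I_{k,\tau(t)} \|\w_t^k - \w_t\|^2 \;\leq\; \sum_{k=1}^N p_k' I_{k,\tau(t)} \|\w_t^k - \w_{\tau(t)}\|^2, \nonumber
\end{align}
since the left-hand side is the minimum over anchors. This reduces the task to controlling the drift of each selected local iterate from $\w_{\tau(t)}$.

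Second, for any $k$ with $I_{k,\tau(t)} = 1$, the local SGD recursion unrolls as $\w_t^k - \w_{\tau(t)} = -\sum_{j=\tau(t)}^{t-1} \eta_j \nabla F_k(\w_j^k,\xi_j^k)$, a sum of at most $E-1$ terms. Applying Jensen's inequality (or Cauchy--Schwarz) to the squared norm, taking expectation, and invoking Assumption~\ref{assump:nodiv} to bound $\mathbb E\|\nabla F_k(\w_j^k,\xi_j^k)\|^2 \leq G^2$, we get
\begin{align}
\mathbb E\|\w_t^k - \w_{\tau(t)}\|^2 \;\leq\; (E-1)\sum_{j=\tau(t)}^{t-1}\eta_j^2 G^2. \nonumber
\end{align}
Using the hypothesis $\eta_{\tau(t)} \leq 2\eta_t$ together with monotonicity of the step size $\eta_j$ on $[\tau(t),t-1]$, each $\eta_j \leq 2\eta_t$, so this is at most $4\eta_t^2(E-1)^2 G^2$. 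Summing over $k$ with weights totaling one and combining with the first step yields the claim.

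The only delicate point is the measurability interplay between the selection indicators $I_{k,\tau(t)}$ and the renormalized weights $p_k'(t)$ (both determined at time $\tau(t)$) and the subsequent SGD noise. Since conditioning on the state at $\tau(t)$ fixes the indicators and weights, while Assumption~\ref{assump:unbiased}--\ref{assump:nodiv} apply to the fresh mini-batches drawn at times $j \in [\tau(t), t-1]$, the tower property lets the $G^2$ bound pass through the outer expectation with no alteration of constants. I expect this conditioning bookkeeping to be the only step requiring care; the rest is a routine unroll-and-bound argument.
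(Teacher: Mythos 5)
Your proposal is correct and follows essentially the same route as the paper's proof: the same variance/anchor inequality to replace the deviation from $\w_t$ by the deviation from the common model $\w_{\tau(t)}$, the same Cauchy--Schwarz unrolling of the at most $E-1$ local SGD steps, and the same use of monotone step sizes together with $\eta_{\tau(t)} \leq 2\eta_t$. If anything, your version is slightly more careful than the paper's, since you unroll the recursion with the stochastic gradients $\nabla F_k(\w_j^k,\xi_j^k)$ (matching Assumption~\ref{assump:nodiv}) and keep the sum to exactly $t-\tau(t)\leq E-1$ terms, whereas the paper's display drops the $\eta_i$ inside the norm and over-extends the index range; these are cosmetic and do not change the bound.
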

\begin{proof}
We have that
\begin{align}
    \mathbb E \sum_{k=1}^N p_k^{\prime} \|\w_t^k - \w_t\|^2I_{k,\tau(t)} &= \mathbb E \sum_{k=1}^N p_k^{\prime} \left\|\left(\w_t^k - \w_{\tau(t)}\right) -  \left(\w_t - \w_{\tau(t)}\right)\right\|^2I_{k,\tau(t)} \nonumber \\
    &\stackrel{(a)}{\leq} \mathbb E \sum_{k=1}^N p_k^{\prime} \|\w_t^k - \w_{\tau(t)}\|^2I_{k,\tau(t)} \nonumber \\
    &\stackrel{(b)}{\leq} \mathbb E \sum_{k=1}^N p_k^{\prime} I_{k,\tau(t)}\mathbb E_{\cdot|p_k^{\prime},I_{k,\tau(t)}}\left(\sum_{i = \tau(t)}^{\tau(t) + E - 1}(E-1)\eta_i^2\|\nabla F_k(\w_i^k)\|^2\right)\nonumber \\
    &\stackrel{(c)}{\leq} \mathbb E \sum_{k=1}^N p_k^{\prime}I_{k,\tau(t)}\sum_{i = \tau(t)}^{\tau(t) + E - 1} (E-1)\eta_{i}^2G^2 \nonumber \\
    &\stackrel{(d)}{\leq} \mathbb E \sum_{k=1}^N p_k^{\prime}I_{k,\tau(t)}\ (E-1)^2\eta_{\tau(t)}^2G^2 \nonumber \\
    &\stackrel{(e)}{\leq} 4\eta_t^2(E-1)^2G^2,
\end{align}
where $(a)$ is due to the definition $\w_t$ and the fact that (for vectors $\mathbf a_k$)
\begin{align*}
    \sum_{k=1}^N \|\mathbf a_k - \bar{\mathbf a}\|^2 \leq \sum_{k=1}^N\|\mathbf a_k\|^2,
\end{align*}
where $\bar{\mathbf a}$ is a weighted average of the $\mathbf a_k$'s, $(b)$ is due to the fact that when $t - \tau(t) \leq E - 1$
\begin{align*}
    \|\w_t^k - \w_{\tau(t)}\|^2 &= \left\|\sum_{i=\tau(t)}^{t}\nabla F(\w_i^k)\right\|^2 \leq \sum_{i=\tau(t)}^{t}(E-1)\eta_i^2\|\nabla F_k(\w_i^k)\|^2 \\
    &\leq \sum_{i=\tau(t)}^{\tau(t) + E - 1}(E-1)\eta_i^2\|\nabla F_k(\w_i^k)\|^2,
\end{align*}
$(c)$ is due to Assumption~\ref{assump:nodiv}, $(d)$ is because $\eta_t$ is non-increasing and $(e)$ is because $\eta_{\tau(t)} \leq 2\eta_t$.
\end{proof}
\begin{lemma}\label{lem:sgdvar}(SGD variance bound) Given Assumption~\ref{assump:unbiased},
\begin{align}
   \mathbb E \|\overline\g_t - \g_t\|^2 \leq \sigma^2.
\end{align} 
\end{lemma}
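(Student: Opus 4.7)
The plan is to expand $\overline\g_t - \g_t$ as a weighted sum of independent, zero-mean noise terms across clients, then collapse the variance of the sum using conditional independence, and finally control the quadratic weights by the fact that $\{p'_k I_{k,\tau(t)}\}$ forms a probability vector.

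First I would write
\begin{align*}
\overline\g_t - \g_t = \sum_{k=1}^N p'_k I_{k,\tau(t)} \bigl(\nabla F_k(\w_t^k) - \nabla F_k(\w_t^k,\xi_t)\bigr),
\end{align*}
and condition on the $\sigma$-algebra generated by $\{\w_t^k\}_{k=1}^N$ together with the selection random variables $(p'_k, I_{k,\tau(t)})_{k=1}^N$. Under this conditioning, by Assumption~\ref{assump:unbiased} each summand is zero-mean, and the mini-batches $\xi_t$ drawn at distinct clients are independent, so the cross-terms vanish and
\begin{align*}
\mathbb E_{\cdot|p'_k,I_{k,\tau(t)}\,\forall k}\,\|\overline\g_t - \g_t\|^2 = \sum_{k=1}^N (p'_k)^2 I_{k,\tau(t)}\, \mathbb E\|\nabla F_k(\w_t^k) - \nabla F_k(\w_t^k,\xi_t)\|^2.
\end{align*}

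Next I would apply the variance bound from Assumption~\ref{assump:unbiased} to each summand to replace the inner expectation by $\sigma^2$, yielding the conditional bound $\sigma^2 \sum_{k=1}^N (p'_k)^2 I_{k,\tau(t)}$. The key observation is that by the very definition of $p'_k$, the weights $\{p'_k I_{k,\tau(t)}\}_{k=1}^N$ sum to $1$ and each lies in $[0,1]$; therefore $(p'_k)^2 I_{k,\tau(t)} \le p'_k I_{k,\tau(t)}$ for every $k$, and consequently $\sum_{k=1}^N (p'_k)^2 I_{k,\tau(t)} \le 1$ almost surely. Taking the outer expectation (over the randomness in the selection indicators and renormalized weights) then gives $\mathbb E\|\overline\g_t - \g_t\|^2 \le \sigma^2$.

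The only nontrivial point in this argument is being careful about what is random at step $t$ versus what is fixed by the filtration up to the last communication round $\tau(t)$: the indicators $I_{k,\tau(t)}$ and the renormalized weights $p'_k$ depend only on $\w_{\tau(t)}$, not on the fresh mini-batch draws $\xi_t$ used in the stochastic gradients at round $t$. Once this independence is spelled out, the calculation is essentially a two-line application of the standard variance-of-a-weighted-sum-of-independent-variables identity combined with the simplex bound $\sum_k (p'_k)^2 I_{k,\tau(t)} \le \sum_k p'_k I_{k,\tau(t)} = 1$.
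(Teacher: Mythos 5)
Your proposal is correct and follows essentially the same route as the paper's proof: expand $\overline\g_t-\g_t$ as the weighted sum of per-client stochastic-gradient noise, kill the cross-terms via Assumption~\ref{assump:unbiased} (zero mean, independent mini-batches) conditionally on the selection variables, bound each conditional second moment by $\sigma^2$, and finish with $\sum_k (p'_k)^2 I_{k,\tau(t)} \le \sum_k p'_k I_{k,\tau(t)} = 1$. The only difference is that you make explicit the measurability point (that $p'_k$ and $I_{k,\tau(t)}$ are determined before the fresh mini-batch draws), which the paper leaves implicit in its conditional expectation notation.
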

\begin{proof}
First we notice that
\begin{align}
    \mathbb E \|\overline\g_t - \g_t\|^2  & = \mathbb E \left\|\sum_{k=1}^Np_k^{\prime} I_{k,\tau(t)}(\nabla F(\w_t^k;\xi_t^k) - \nabla F(\w_t^k) )\right\|^2 \nonumber\\
    &\leq \mathbb E \sum_{k=1}^N{p_k^{\prime}}^2 I_{k,\tau(t)} \mathbb E_{\cdot|p_k^{\prime}, I_{k,\tau(t)}}\|\nabla F(\w_t^k;\xi_t^k) - \nabla F(\w_t^k)\|^2 \nonumber \\
    &\stackrel{(a)}{\leq} \sigma^2 \mathbb E\sum_{k=1}^N {p_k^{\prime}}^2 I_{k,\tau(t)} \stackrel{(b)}{\leq} \sigma^2,
\end{align}
where $(a)$ is due to Assumption~\ref{assump:unbiased} and $(b)$ is because $p_k^{\prime} \leq 1$ and $\sum_{k=1}^Np_k^{\prime}I_{k,\tau(t)} = 1$.
\end{proof}

Notice that in Lemma~\ref{lem:sgdvar}, if one instead has individual variance bounds of $\sigma_k^2$, one can also extend this lemma to obtain
\begin{align}
    \mathbb E \|\overline\g_t - \g_t\|^2 \leq \mathbb E\sum_{k=1}^N {p_k^{\prime}}^2\sigma_k^2
\end{align}
in a straightforward manner. If required, the subsequent analysis can be easily extended to this case.
\subsection{One-step SGD}
    
\begin{lemma}\label{lem:onestepsgd} (One step SGD) Assume Assumptions~\ref{assump:smooth}, \ref{assump:convex}, \ref{assump:unbiased} and \ref{assump:nodiv}. If $\eta_t \leq \frac{1}{4L}$ and $\w^{*}$ is the global optimum of objective $F(.)$, then
\begin{align}
        \mathbb{E} \|\w_{t+1} - \w^{*}\|^2  &\leq (1-\eta_t\mu)\mathbb{E}\|\w_t - \w^{*}\|^2 + \eta_t^2 B_t + \eta_t D_t,
\end{align}
where  
\begin{align}
    B_t &:= 8(E-1)^2G^2 + 6L\Gamma\mathbb{E}\left[\left(\frac{1}{1 + \sum_{k \notin \Prio}p_kI_{k,\tau(t)} }\right)\right] + \sigma^2\nonumber \\
    D_t &= 2\mathbb E\left[\frac{\sum_{k\notin P}p_kI_{k,\tau(t)}\Gamma_k}{{1 + \sum_{k\notin \Prio}p_k I_{k,\tau(t)}}}\right]\nonumber \\
    \Gamma &:= F^{*} - \sum_{k \in \Prio } p_k F_k^{*},   \Gamma_k := F_k(\w^*) - F_k^*.
\end{align}
\end{lemma}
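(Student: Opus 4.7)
The proof adapts the standard one-step SGD analysis of Li et al.\ for FedAvg to the FedALIGN setting, where the random selection of non-priority clients introduces the re-normalization $p'_k = p_k / S$ with $S := 1 + \sum_{k\notin\Prio} p_k I_{k,\tau(t)}$. I would begin from the virtual update $\w_{t+1} = \w_t - \eta_t \g_t$ and expand
\begin{equation*}
\|\w_{t+1} - \w^*\|^2 = \|\w_t - \w^* - \eta_t \overline\g_t\|^2 - 2\eta_t\langle \w_t - \w^* - \eta_t\overline\g_t,\; \g_t - \overline\g_t\rangle + \eta_t^2\|\g_t - \overline\g_t\|^2.
\end{equation*}
Taking expectation, the cross term vanishes by unbiasedness (Assumption~\ref{assump:unbiased}), and the variance piece is bounded by $\eta_t^2 \sigma^2$ via Lemma~\ref{lem:sgdvar}.

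\textbf{Bounding the deterministic piece.} Next I would expand $\|\w_t - \w^* - \eta_t\overline\g_t\|^2 = \|\w_t - \w^*\|^2 - 2\eta_t\langle \w_t - \w^*, \overline\g_t\rangle + \eta_t^2\|\overline\g_t\|^2$. The last term is handled by Jensen's inequality (using $\sum_k p'_k I_{k,\tau(t)} = 1$) followed by Lemma~\ref{lem:l-smooth}, yielding $\eta_t^2 \|\overline\g_t\|^2 \leq 2L\eta_t^2 \sum_k p'_k I_{k,\tau(t)} (F_k(\w_t^k) - F_k^*)$. For the cross term I would split $\w_t - \w^* = (\w_t - \w_t^k) + (\w_t^k - \w^*)$ inside the weighted sum. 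The $\w_t - \w_t^k$ piece is controlled by Cauchy-Schwarz and AM-GM together with Lemma~\ref{lem:l-smooth}, contributing a drift term $\sum_k p'_k I_{k,\tau(t)}\|\w_t - \w_t^k\|^2$ (later bounded via Lemma~\ref{lem:avgdisc}) plus another copy of $2L\eta_t^2 \sum_k p'_k I_{k,\tau(t)}(F_k(\w_t^k) - F_k^*)$. The $\w_t^k - \w^*$ piece is handled by $\mu$-strong convexity of each $F_k$, producing a term $-2\eta_t \sum_k p'_k I_{k,\tau(t)}(F_k(\w_t^k) - F_k(\w^*))$ together with $-\mu\eta_t \sum_k p'_k I_{k,\tau(t)}\|\w_t^k - \w^*\|^2$, the latter converted to $-\mu\eta_t \|\w_t - \w^*\|^2$ via Lemma~\ref{lem:aggreup}.

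\textbf{Extracting $\Gamma$ and the non-priority bias.} The decisive step is to combine the two sets of function-value terms with coefficients $4L\eta_t^2$ (from $\|\overline\g_t\|^2$ and the drift) and $-2\eta_t$ (from strong convexity). Writing $F_k(\w_t^k) - F_k(\w^*) = (F_k(\w_t^k) - F_k^*) - \Gamma_k$ and using $\eta_t \leq 1/(4L)$ lets me absorb $(4L\eta_t^2 - 2\eta_t)(F_k(\w_t^k) - F_k^*)$ into a non-positive quantity (since $F_k(\w_t^k) \geq F_k^*$, together with the Jensen-based lower bound on $\sum p'_k I (F_k(\w_t^k)-F_k^*)$), leaving a residual proportional to $\eta_t^2 L \cdot \sum_k p'_k I_{k,\tau(t)} \Gamma_k$. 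The key novel identity is that, unlike standard FedAvg where $\sum_k p_k \Gamma_k = \Gamma$, here the re-normalization gives
\begin{equation*}
\sum_k p'_k I_{k,\tau(t)} \Gamma_k = \frac{1}{S}\Bigl(\sum_{k\in\Prio} p_k \Gamma_k + \sum_{k\notin\Prio} p_k I_{k,\tau(t)} \Gamma_k\Bigr) = \frac{\Gamma}{S} + \frac{1}{S}\sum_{k\notin\Prio} p_k I_{k,\tau(t)} \Gamma_k,
\end{equation*}
since $\sum_{k\in\Prio} p_k \Gamma_k = \Gamma$ by definition. This splits the residual into exactly the $\Gamma/S$ contribution (whose expectation, after collecting constants, produces the $6L\,\Gamma\,\mathbb{E}[1/S]$ summand of $B_t$ at the $\eta_t^2$ scale) and the non-priority bias $\sum_{k\notin\Prio} p_k I_{k,\tau(t)} \Gamma_k / S$ (whose expectation is $D_t$ at the $\eta_t$ scale, corresponding to the unavoidable tilt introduced when a non-priority client's local minimum differs from $\w^*$). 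Finally, the drift term $\mathbb{E}\sum_k p'_k I_{k,\tau(t)}\|\w_t - \w_t^k\|^2$ is bounded by $4\eta_t^2 (E-1)^2 G^2$ via Lemma~\ref{lem:avgdisc}, which, after the AM-GM constants, gives the $8(E-1)^2 G^2$ summand of $B_t$.

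\textbf{Main obstacle.} The delicate step is carrying the random re-normalization $S$ cleanly through every bound. Because $p'_k = p_k / S$ depends on the indicators of the current round, every Jensen and strong-convexity step has to be applied conditional on the $\sigma$-algebra generated by $(p'_k, I_{k,\tau(t)})_{k}$ (as in the preliminary lemmas), and then the outer expectation taken. The non-trivial part is ensuring that after combining the $-2\eta_t$ and $4L\eta_t^2$ coefficients one can cleanly isolate $\Gamma/S$ multiplied by $L\eta_t^2$ alone (rather than by $\eta_t$), while simultaneously keeping the non-priority $\Gamma_k$'s coupled to $I_{k,\tau(t)}/S$ inside the expectation that defines $D_t$. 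This is the point at which the analysis genuinely diverges from the standard FedAvg proof and where the $\epsilon_t$-dependent trade-off between $\theta_T$ and $\rho_T$ first becomes visible in the recursion.
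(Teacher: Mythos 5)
Your skeleton matches the paper's proof almost step for step (same expansion of $\|\w_{t+1}-\w^*\|^2$, same use of Lemmas~\ref{lem:l-smooth}--\ref{lem:sgdvar}, same AM--GM/strong-convexity split of the inner product, and the correct identity $\sum_{k\in\Prio}p_k\Gamma_k=\Gamma$ behind the priority/non-priority separation). But there is a genuine gap at the decisive step. You propose to absorb $(4L\eta_t^2-2\eta_t)\sum_k p'_kI_{k,\tau(t)}(F_k(\w_t^k)-F_k^*)$ as a non-positive quantity and keep the residual $\sum_k p'_kI_{k,\tau(t)}\Gamma_k$; but that residual enters with coefficient $2\eta_t$ (it comes from rewriting $-2\eta_t(F_k(\w_t^k)-F_k(\w^*))$), not $L\eta_t^2$ as you claim. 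Applied uniformly to all clients, this leaves the priority contribution $2\eta_t\,\Gamma/S$ at order $\eta_t$, which would push $\Gamma$ into the $\eta_t D_t$ bias term rather than into $\eta_t^2 B_t$ --- a strictly weaker statement that does not reduce to the FedAvg bound when no non-priority client participates, and does not prove the lemma as stated.

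To get $\Gamma$ down to the $\eta_t^2$ scale you must treat the priority clients differently: re-center $F_k(\w_t^k)-F_k^*$ at the \emph{global} optimum, writing it as $(F_k(\w_t^k)-F^*)+(F^*-F_k^*)$, so that the order-$\eta_t$ pieces cancel, $2\eta_t\Gamma/S-\gamma_t\Gamma/S=4L\eta_t^2\Gamma/S$ with $\gamma_t=2\eta_t(1-2L\eta_t)$. The price is the leftover $-\gamma_t\sum_{k\in\Prio}p'_k(F_k(\w_t^k)-F^*)$, which is \emph{not} sign-definite (unlike $F_k(\w_t^k)-F_k^*\ge 0$) and requires a further chain of inequalities --- adding and subtracting $F_k(\w_t)$, another AM--GM, Lemma~\ref{lem:l-smooth}, and $\sum_{k\in\Prio}p'_k(F_k(\w_t)-F^*)\ge 0$ --- contributing the extra $\gamma_t\eta_t L\,\Gamma/S\le 2L\eta_t^2\Gamma/S$ and a second copy of the drift term; this is how the paper arrives at the $6L\eta_t^2\,\mathbb E[\Gamma/S]$ summand of $B_t$ and the full $8(E-1)^2G^2$. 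For the non-priority clients no such re-centering is available, which is precisely why their $\Gamma_k$'s survive at order $\eta_t$ as $D_t$. Your writeup names the correct final coefficients but the mechanism you describe cannot produce them; this cancellation-plus-remainder argument is the missing idea.
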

We prove this lemma in the next section. Now define $\Delta_t = \mathbb{E} \|\w_t - \w^*\|^2.$
We assume a diminishing step-size $\eta_t =\frac{2}{\mu(t+\gamma)}$, with $\eta_1 \leq \min{\left(\frac{1}{4L},\frac{1}{\mu}\right)}$ and $\eta_{t} \leq 2\eta_{t + E}$ (The choice of $\gamma$ guarantees this condition).
We 
 can prove that $\Delta_t \leq \frac{v_t}{\gamma +t}$ by induction, where $v_t = \left(\frac{4}{\mu^2}\overline{B}_t + \frac2{\mu}\overline{D}_t(t+\gamma) +  \gamma\Delta_0\right)$, where (if we pick $\gamma = \max\{\frac{8L}{\mu},E\}$)
\begin{align}
    \overline{B}_0,\overline{B}_1 = 0 ,\overline{B}_t &= \frac{1}{t + \gamma - 2}\sum_{i=1}^{t-1}B_i\hspace{3mm}\forall t\geq 2, \nonumber \\
      \overline{D}_0,\overline{D}_1 = 0,\overline{D}_t &= \frac{1}{t + \gamma - 2}\sum_{i=1}^{t-1}D_t\hspace{3mm}\forall t\geq 2.
\end{align}
We proceed via induction on the number of time steps $t$.

\textbf{Base case.} When $t = 0$,
\begin{align}
\Delta_0 \leq \frac{v_0}{\gamma} = \Delta_0,
\end{align}
from the definition of $v_t$.

\textbf{Inductive step.} Assume  $\Delta_t \leq \frac{v_t}{\gamma +t}$. Then for $t+1$, due to Lemma~\ref{lem:onestepsgd}
\begin{align}
    &\Delta_{t+1} \leq \left(1-\frac{2}{t + \gamma}\right)\Delta_t +\frac{4}{\mu^2(t+\gamma)^2}B_t + \frac{2}{\mu(t+\gamma)}D_t\nonumber \\
    &\stackrel{(a)}{\leq}  \left(1-\frac{2}{t + \gamma}\right)\frac{v_t}{t+\gamma} +\frac{4}{\mu^2(t+\gamma)^2}B_t + \frac{2}{\mu(t+\gamma)}D_t\nonumber \\
    &= \frac{4}{\mu^2( t + \gamma)^2}\left((t+\gamma -2)\overline{B}_t + B_t\right) + \frac{2}{\mu(t+\gamma)}\left((t+\gamma -2)\overline D_t + D_t \right) + \gamma\frac{(t+\gamma-2)\Delta_0}{(t+\gamma)^2}\nonumber \\
    &= \frac{4}{\mu^2( t + \gamma)^2}\left(\sum_{i=1}^{t-1}B_i + B_t\right) + \frac{2}{\mu(t+\gamma)}\left(\sum_{i=1}^{t-1}D_i + D_t\right) + \gamma\frac{(t+\gamma-2)\Delta_0}{(t+\gamma)^2}\nonumber\\
    &\stackrel{(b)}{\leq} \frac{4(t+\gamma - 1)}{\mu^2( t + \gamma)^2}\overline{B}_{t+1} + \frac{2(t+\gamma-1)}{\mu(t+\gamma)(t + \gamma + 1)}\overline D_{t+1}(t + \gamma + 1)+ \gamma\frac{\Delta_0}{t+\gamma +1}\stackrel{(c)}{\leq} \frac{v_{t+1}}{\gamma + t + 1},
\end{align}
where $(a)$ is due to the inductive hypothesis, $(b)$ and $(c)$ are because
\begin{align}
    \frac{t+\gamma - 2}{(t+\gamma)^2} \leq \frac{(t+ \gamma - 1)}{(t+\gamma)^2} = \frac{(t+\gamma)^2 - 1}{(t+\gamma)(t + \gamma + 1)} \leq \frac{1}{t + \gamma +
    1},
\end{align}
Thus, by $L$-Lipschitz smoothness)
\begin{align}
    \mathbb E[F(\w_t)] - F^{*} \leq \frac{L}{2}\frac{v_t}{t+\gamma} \leq 
     \left(C_1 + C_2 \theta_T\Gamma\right) + \rho_T,
\end{align}
which proves Theorem~\ref{thm:conv_new}.


\subsection{Proof of Lemma~\ref{lem:onestepsgd}}
\emph{Proof of Lemma~\ref{lem:onestepsgd}.} 
 By definition $\w_{t+1} = \w_{t} - \eta_t \g_t$. Therefore we can write
\begin{align}\label{eq:firststep}
   &\|\w_{t+1} - \w^*\|^2 = \|\w_{t} -  \eta_t \g_t - \w^{*} - \eta_t\overline\g_t + \eta_t\overline\g_t\|^2 \nonumber \\
   &= {=}\underbrace{\|\w_t - \w^* - \eta_t\overline\g_t \|^2}_{A_1} +\underbrace{2\innerproduct{ \w_t - \w^* - \eta_t\overline\g_t}{\overline\g_t - \g_t}}_{B} + \eta_t^2 \|\g_t - \overline \g_t\|^2.
\end{align}
Notice that for $B$ in \eqref{eq:firststep}, 
\begin{align}\label{eq:unbiasedsgdeq}
\mathbb E[B] = &\mathbb E[\innerproduct{ \w_t - \w^* - \eta_t\overline\g_t}{\overline\g_t - \g_t}] \nonumber \\
&= \innerproduct{ \w_t - \w^* - \eta_t\overline\g_t}{\overline\g_t - \mathbb E_{\xi}[\g_t]} = 0,
\end{align}
where $E_{\xi}$, is the expectation over the distribution $\xi = \{\xi_1,\dots,\xi_k\}$, the data sampled in a mini-batch by each client at time step $t$.

Now consider $A_1$ in \eqref{eq:firststep}. We can further split it into $3$ terms as
\begin{align}\label{eq:firststepone}
    &\|\w_{t} - \eta_t\overline\g_t - \w^{*}\|^2 = \|\w_{t} - \w^{*}\|^2 - \underbrace{2\eta_t\innerproduct{\w_{t} - \w^{*}}{\overline\g_t}}_{A_2} + \underbrace{\eta_t^2\|\overline\g_t\|^2}_{A_3}.
\end{align}
One can upper bound $A_3$ using Lemma~\ref{lem:l-smooth} ($L$-smoothness), which yields
\begin{align}\label{eq:gradientup}
    \eta_t^2\|\overline\g_t\|^2 &\leq \eta_t^2\sum_{k=1}^{N}p'_k\|\nabla F_k(\w^{k}_t)\|^2 I_{k,\tau(t)}\leq 2L\eta_t^2\sum_{k=1}^Np'_k\left(F_k(\w^{k}_t) - F_k^*\right)I_{k,\tau(t)}.
\end{align}
Moreover considering $A_2$, we can expand it as
\begin{align}\label{eq:gradinner}
    - 2\eta_t\innerproduct{\w_{t} - \w^{*}}{\overline\g_t} = -2 \eta_t \sum_{k=1}^Np'_k\innerproduct{\w_{t} - \w^{*}}{\nabla F_k ({\w}^{k}_{t})}I_{k,\tau(t)}.
\end{align}

Now each term $-2\eta_t\innerproduct{\w_{t} - \w^{*}}{\nabla F_k ({\w}^{k}_{t})}$ 
can be upper bounded as
\begin{align}\label{eq:gradinnerupper}
    &-2\innerproduct{\w_{t} - \w^{*}}{\nabla F_k ({\w}^{k}_{t})} = -2\innerproduct{\w_{t} - \w_t^{k}}{\nabla F_k ({\w}^{k}_{t})} - 2\innerproduct{\w_t^{k} - \w^{*}}{\nabla F_k ({\w}^{k}_{t})} \nonumber \\
    &\leq \underbrace{\frac{1}{\eta_t}\| \w_t - {\w}^{k}_{t}\|^2 + \eta_t\|\nabla F_k (\w^k_t)\|^2}_{\text{AM-GM inequality}} - \underbrace{2(F_k(\w_t^k) - F_k(\w^*)) - \mu\|\w_t^k - \w^*\|^2}_{\text{Strong Convexity}},
\end{align}
where the AM-GM inequality refers to the fact that the arthiemtic mean is greater geometric mean. Precisely for two vectors $\mathbf z_1, \mathbf z_2$, we have
\begin{align}
    2\innerproduct{\mathbf z_1}{\mathbf z_2} =  2\innerproduct{a\mathbf z_1}{(1/a)\mathbf z_2} \leq a^2\|\mathbf z_1\|^2 + \frac{1}{a^2}\|\mathbf z_2\|^2,
\end{align}
where setting $a = \sqrt{\eta_t}$, $\mathbf z_1 =  \w^{*} - \w_{t}$ and $\mathbf z_2 = \nabla F_k ({\w}^{k}_{t})$, leads to the desired inequality, and Strong Convexity is due to Assumption~\ref{assump:convex}.

Combining \eqref{eq:firststepone}, \eqref{eq:gradientup}, \eqref{eq:gradinner} and \eqref{eq:gradinnerupper} we obtain
\begin{align}\label{eq:termscomb}
    &\|\w_{t+1} - \eta_t\overline\g_t - \w^{*}\|^2
\leq \|\w_t - \w^{*}\|^2 + 2L\eta_t^2\sum_{k=1}^Np'_k (F_k(\w_t^k) - F_k^{*})I_{k,\tau(t)} \nonumber \\
    &+ \eta_t\sum_{k=1}^N p'_k \left(\frac{1}{\eta_t}\|\w_t - \w^k_t\|^2 + \eta_t\|\nabla F_k(\w_t^k)\|^2 - 2(F_k(\w_t^k) - F_k(\w^*)) - \mu\|\w_t^k - \w^*\|^2\right) I_{k,\tau(t)}  \nonumber \\
    &\stackrel{(a)}{\leq} \|\w_t - \w^{*}\|^2  - \mu\eta_t\sum_{k=1}^N p_k^{\prime} \|\w_t^k - \w^{*}\|^2I_{k,\tau(t)} + \sum_{k=1}^N p'_k\|\w_t - \w_t^k\|^2I_{k,\tau(t)}
     \nonumber \\
    &+ 4L\eta_t^2\sum_{k=1}^Np'_k (F_k(\w_t^k) 
    - F_k^{*})I_{k,\tau(t)} - 2\eta_t\sum_{k=1}^Np'_k\left(F_k(\w_t^k) - F_k(\w^*)\right)I_{k,\tau(t)} \nonumber \\
    &\stackrel{(b)}{\leq} (1-\eta_t \mu) \|\w_t - \w^{*}\|^2+ \sum_{k=1}^N p'_k\|\w_t - \w_t^k\|^2I_{k,\tau(t)} \nonumber \\
    &+ \underbrace{4L \eta_t^2 \sum_{k\in\Prio} p'_k (F_k(\w_t^k) - F_k^*) 
    - 2\eta_t\sum_{k\in\Prio}p'_k\left(F_k(\w_t^k) - F_k(\w^*)\right)}_{A_4}\nonumber \\ 
    &+\underbrace{4L \eta_t^2 \sum_{k\notin \Prio}  p'_k (F_k(\w_t^k) - F_k^*)I_{k,\tau(t)} - 2\eta_t\sum_{k\notin\Prio}p'_k\left(F_k(\w_t^k) - F_k(\w^*)\right)I_{k,\tau(t)}}_{A_5},
\end{align}
where $(a)$ is due to \eqref{eq:gradientup} and $(b)$ follows from Lemma~\ref{lem:aggreup}.
$A_4$ captures the aggregation of priority clients, while $A_5$ captures the aggregation of non-priority clients. We will bound them separately.
First let us consider $A_4$ in \eqref{eq:termscomb}  $\left(\text{recall }\eta_t \leq \frac{1}{4L}\right)$
\begin{align}\label{eq:gammasplit}
    &4L \eta_t^2 \sum_{k\in\Prio} p'_k (F_k(\w_t^k) - F_k^*)
    - 2\eta_t\sum_{k\in\Prio}p'_k\left(F_k(\w_t^k) - F_k(\w^*)\right) \nonumber \\
    &= -\gamma_t\sum_{k\in\Prio} p'_k (F_k(\w_t^k) - F_k^*) + 2\eta_t \sum_{k\in\Prio} p'_k (F_k(\w^{*}) - F_k^*) \nonumber \\
    &= -\gamma_t \sum_{k\in\Prio} p'_k(F_k(\w_t^k) - F^*) + 4L\eta_t^2\sum_{k\in\Prio} p'_k(F^*-F^*_k) \nonumber \\
    &= \underbrace{-\gamma_t \sum_{k\in\Prio} p'_k(F_k(\w_t^k) - F^*)}_{A_6} + 4L\eta_t^2E_t,
\end{align}
where 
$\gamma_t :=  2\eta_t(1-2L\eta_t)$ and $$E_t:= \mathbb E\left[\frac{\Gamma}{1+\sum_{k\notin\Prio}p_kI_{k,\tau(t)}}\right].$$

Now to upper bound $A_6$ in \eqref{eq:gammasplit} we consider the following chain of inequalities
\begin{align}
    &\sum_{k\in\Prio} p'_k (F_k(\w_t^k) - F^*)  = \sum_{k\in\Prio} p'_k (F_k(\w_t^k) - F_k(\w_t)) + \sum_{k\in\Prio} p'_k (F_k(\w_t) - F^*) \nonumber \\
    &\geq \sum_{k \in \Prio} p'_k\innerproduct{\nabla F_k(\w_t)}{\w^k_t - \w_t} + \sum_{k\in\Prio} p'_k (F_k(\w_t) - F^*) \nonumber \\
    &\stackrel{(a)}{\geq} -\frac12 \sum_{k\in\Prio} p'_k \left[\eta_t\|\nabla F_k(\w_t)\|^2 + \frac{1}{\eta_t}\|\w_t^k -\w_t\|^2\right] + \sum_{k\in\Prio} p'_k (F_k(\w_t) - F^*) \nonumber \\
    &\stackrel{(b)}{\geq} - \sum_{k\in\Prio} p'_k \left[\eta_t L(F_k(\w_t) - F_k^*) + \frac{1}{\eta_t}\|\w_t^k -  \w_t\|^2\right] + \sum_{k\in\Prio} p'_k (F_k(\w_t) - F^*),
\end{align}
where $(a)$ is due to the AM-GM inequality and $(b)$ is due to Lemma~\ref{lem:l-smooth}. Using this, $A_6$ in \eqref{eq:gammasplit} can be upper bounded as 
\begin{align}
    A_6 &\leq \gamma_t \sum_{k\in\Prio} p'_k\left[\eta_t L (F_k(\w_t) - F_k^*) + \frac{1}{2\eta_t}\|\w_t^k - \w_t\|^2\right] - \gamma_t \sum_{k\in\Prio} p'_k (F_k(\w_t) - F^*) \nonumber \\
    &= \gamma_t (\eta_t L - 1)\sum_{k\in\Prio} p'_k(F(\w_t) - F^*)  + \gamma_t\eta_t L E_t + \frac{\gamma_t}{2\eta_t}\sum_{k\in\Prio}p'_k \|\w_t^k -  \w_t\|^2\nonumber \\
    &\stackrel{(a)}{\leq} \gamma_t\eta_t L E_t + \frac{\gamma_t}{2\eta_t}\sum_{k=1}^N p'_k \|\w_t^k - \w_t\|^2  \stackrel{(b)}{\leq} \sum_{k\in\Prio} p'_k \|\w_t^k - \w_t\|^2 + \gamma_t\eta_t L E_t,
\end{align}
where $(a)$ is because $(F(\w_t) - F^*) \geq 0$ and $\gamma_t (\eta_t L - 1) \leq 0$, since we assume $\eta_L \leq \frac{1}{4L}$ and $(b)$ is because $\frac{\gamma_t}{2\eta_t} \leq 1$.
Therefore \eqref{eq:gammasplit} ($A_4$ in \eqref{eq:gradinnerupper}) can be upper bounded as
\begin{align}\label{eq:priobound}
   A_4 &\leq \sum_{k\in\Prio} p'_k \|\w_t^k - \w_t\|^2 I_{k,\tau(t)}+ (\gamma_t\eta_t L+ 4L\eta_t^2)  E_t \nonumber \\
    &\leq \sum_{k\in\Prio} p'_k \|\w_t^k - \w_t\|^2 I_{k,\tau(t)} +6L\Gamma\eta_t^2 \mathbb E\left[\frac{1}{1+\sum_{k\notin\Prio}p_kI_{k,\tau(t)}}\right].
\end{align}
Now consider $A_5$ in \eqref{eq:termscomb}
\begin{align}\label{eq:nonprio}
    &4L \eta_t^2 \sum_{k\notin \Prio}  p'_k (F_k(\w_t^k) - F_k^*)I_{k,\tau(t)} - 2\eta_t\sum_{k\notin\Prio}p'_k\left(F_k(\w_t^k) - F_k(\w^*)\right)I_{k,\tau(t)} \nonumber \\
    &= -\gamma_t\sum_{k\notin\Prio} p'_k (F_k(\w_t^k) - F_k^*)I_{k,\tau(t)} + 2\eta_t \sum_{k\notin\Prio} p'_k (F_k(\w^{*}) - F_k^*)I_{k,\tau(t)} 
    \nonumber \\
    &= \underbrace{-\gamma_t\sum_{k\notin\Prio} p'_k (F_k(\w_t^k) - F_k^*)I_{k,\tau(t)}}_{A_7} + \eta_t D_t.
\end{align}
where $\gamma_t := 2\eta_t(1-2L\eta_t)$ and  (for $(\Gamma_k :=F_k(\w^{*})-F^*_k)$)
\begin{align}
D_t:= 2\mathbb E\left[\frac{\sum_{k\notin P}p_k\Gamma_kI_{k,\tau(t)}}{{1 + \sum_{k\notin \Prio}p_k I_{k,\tau(t)}}}\right].
\end{align}
Now to upper bound to $A_7$ in \ref{eq:nonprio}, we consider the following chain of inequalities
\begin{align}
    &\sum_{k\notin\Prio} p'_k (F_k(\w_t^k) - F_k^*)I_{k,\tau(t)}  = \sum_{k\notin\Prio} p'_k (F_k(\w_t^k) - F_k(\w_t))I_{k,\tau(t)} + \sum_{k\notin\Prio} p'_k (F_k(\w_t) - F_k^*)I_{k,\tau(t)}\nonumber \\
    &\geq \sum_{k \notin \Prio} p'_k\innerproduct{\nabla F_k(\w_t)}{\w^k_t - \w_t}I_{k,\tau(t)} + \sum_{k\notin\Prio} p'_k (F_k(\w_t) - F_k^*)I_{k,\tau(t)} \nonumber \\
    &\stackrel{(a)}{\geq} -\frac12 \sum_{k\notin\Prio} p'_k \left[\eta_t\|\nabla F_k(\w_t)\|^2 + \frac{1}{\eta_t}\|\w_t^k -\w_t\|^2\right]I_{k,\tau(t)} + \sum_{k\notin\Prio} p'_k (F_k(\w_t) - F_k^*)I_{k,\tau(t)}\nonumber \\
    &\stackrel{(b)}{\geq} - \sum_{k\notin\Prio} p'_k \left[\eta_t L(F_k(\w_t) - F_k^*) + \frac{1}{\eta_t}\|\w_t^k -  \w_t\|^2\right]I_{k,\tau(t)} + \sum_{k\notin\Prio} p'_k (F_k(\w_t) - F_k^*))I_{k,\tau(t)},
\end{align}
where $(a)$ is due to the AM-GM inequality and $(b)$ is due to Lemma~\ref{lem:l-smooth}. Using this, $A_7$ in \eqref{eq:nonprio} can be upper bounded as
\begin{align}
   A_7 &\leq \gamma_t \sum_{k\notin\Prio} p'_k\left[\eta_t L (F_k(\w_t) - F_k^*) + \frac{1}{2\eta_t}\|\w_t^k - \w_t\|^2\right]I_{k,\tau(t)} - \gamma_t \sum_{k\notin\Prio} p'_k (F_k(\w_t) - F_k^*)I_{k,\tau(t)} \nonumber \\
    &= \gamma_t (\eta_t L - 1)\sum_{k\notin\Prio} p'_k(F_k(\w_t) - F_k^*))I_{k,\tau(t)} + \frac{\gamma_t}{2\eta_t}\sum_{k\notin\Prio}p'_k \|\w_t^k -  \w_t\|^2 I_{k,\tau(t)}\nonumber \\
    &\stackrel{(a)}{\leq} \frac{\gamma_t}{2\eta_t}\sum_{\notin \Prio} p'_k \|\w_t^k - \w_t\|^2 I_{k,\tau(t)} \stackrel{(b)}{\leq} \sum_{k\notin\Prio} p'_k \|\w_t^k - \w_t\|^2I_{k,\tau(t)},
\end{align}
where $(a)$ is because for $\left(\eta_t \leq \frac{1}{4L}\right)$ and $\gamma_t (\eta_t L - 1) \leq 0$ and $(F_k(\w_t) - F_k^*)\geq 0$, and $(b)$ is again due $\frac{\gamma_t}{2\eta_t} \leq 1$. 
Therefore \eqref{eq:nonprio} ($A_5$ in \eqref{eq:termscomb}) can be upper bounded as
\begin{align}\label{eq:nonprioupper}
    A_5 &\leq \sum_{k\notin\Prio} p'_k \|\w_t^k - \w_t\|^2 I_{k,\tau(t)}+ \eta_t D_t.
\end{align}

Combining \eqref{eq:firststep},\eqref{eq:unbiasedsgdeq},\eqref{eq:termscomb}, \eqref{eq:priobound} and \eqref{eq:nonprioupper}, followed by taking expectation on both sides and using Lemmas~\ref{lem:avgdisc}and~\ref{lem:sgdvar}, we infer that
\begin{align}
    \Delta_{t+1} \leq (1- \mu_t \eta_t)\Delta_t + 8\eta_t^2 (E-1)^2 G^2 + \eta_t^2 \sigma^2  + 6L\eta_t^2 E_t + \eta_t D_t,
\end{align}
which proves Lemma~\ref{lem:onestepsgd}.

\subsection{Extensions to partial participation of clients}
In this section, we generalize our findings to scenarios where only a randomly selected subset of priority indices are involved at any given time instance (with or without replacement). Additionally, we consider the case where non-priority clients engage according to any arbitrary participation pattern in each communication round.

Initially, we focus on the arbitrary participation of non-priority clients. A critical insight from our research is that for the $k^{\rm th}$ non-priority client, the determination of their participation or non-participation in the aggregation during a specific communication round is guided by the indicator random variable $I_{k,\tau(t)}$. This variable is a general selection rule, flexible enough to encompass all potential situations, whether the server selects the clients, the clients voluntarily participate, or both.

For a broader applicability to arbitrary client cases, our analysis can be conveniently extended by redefining $I_{k,\tau(t)}$ for a non-priority client as
\begin{align}
 I_{k,\tau(t)} := \mathbf{1}{\{|F_{k} \left(\w_t^{k}\right) - F\left(\w_t^{k}\right)| < \epsilon_t\}}\times\mathbf{1}{\{\text{Client $k$ participates in $\tau(t)^{\rm th}$ communication round}\}}.
\end{align}
Under this revised definition of $I_{k,\tau(t)}$, our results remain valid.
We consider this revised definition throughout this subsection.
In a particular case, when $$\mathbb E[\mathbf{1}{\{\text{Client $k$ participates in $\tau(t)^{\rm th}$ communication round}\}}] = p,$$ 
this scenario corresponds to a selection method where each non-priority client is chosen uniformly at random with a probability $p$. This flexible integration of non-priority clients aligns with realistic situations, wherein certain non-priority clients may be excluded in every communication round due to factors such as unreliability or associated high computation or communication costs.

Now let us consider the selection pattern with replacement considered in \cite{li2020convergence}, which corresponds to partial priority-client participation. We also follow the proof skeleton in \cite{li2020convergence}.
Let the set $\mathcal{S}_t$ define the clients participating at time step $t$. We refer to the following sampling scheme as ``sampling with replacement''.

%
The server sets $\mathcal S_t$ by sampling (with replacement) an index $k \in \Prio$ corresponding to probabilities generated by the data fraction $p_k:k\in\Prio$, recalling that $\sum_{k \in \Prio}p_k = 1$. Therefore, $S_t$ is a multiset. If $t$ corresponds to a communication round then the server aggregates the parameters as
\begin{align}
\w^{\prime}_{t+1} = \frac{1}{K} \sum_{k \in \mathcal S_{t+1}} \frac{\w_t^{k}}{1+ \sum_{k\notin\Prio }p_kI_{k,\tau{(t)}}} + \sum_{k \notin \Prio}p_k^{\prime}\w_t^k.
\end{align}

The main insight to why we can prove our result is that this aggregation $\w^{\prime}_{t+1}$ is an unbiased estimate of $\w_{t+1}$ defined in the previous section. Specifically under $\mathbb E_{\mathcal{S}_t|I_{k,\tau(t)}}$, the following lemmas (restated from \cite{li2020convergence}) hold true.

\begin{lemma}\label{lem:unbiasedboundpar}(Unbiased sampling scheme)
If $t$ corresponds to a communication round, , we have 
$$\mathbb E_{\mathcal{S}_t|I_{k,\tau(t)}}[\w^{\prime}_{t}] = \w_t.$$
\end{lemma}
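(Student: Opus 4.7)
The plan is to unpack the definition of $\w'_{t+1}$ and observe that, once we condition on the indicator family $\{I_{k,\tau(t)}\}_{k=1}^N$, every quantity except the sampled multiset $\mathcal{S}_t$ becomes deterministic. In particular, the denominator $1 + \sum_{k' \notin \Prio} p_{k'} I_{k',\tau(t)}$ is a fixed constant, as are the renormalized weights $p_k'$ and the non-priority contribution $\sum_{k \notin \Prio} p_k' \w_t^k I_{k,\tau(t)}$. So the only thing to take an expectation over is the uniform average $\frac{1}{K}\sum_{k \in \mathcal{S}_t} \w_t^k$, multiplied by the constant scalar $1/(1 + \sum_{k' \notin \Prio} p_{k'} I_{k',\tau(t)})$.

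To compute that expectation, I would use the fact that $\mathcal{S}_t$ consists of $K$ i.i.d. draws of a categorical index $\kappa$ on $\Prio$ with $\Pr(\kappa = j) = p_j$. The key point here is that $\{p_j\}_{j \in \Prio}$ sums to one by the normalization convention adopted in Section~\ref{sec:relatedworks}, so this is a well-defined distribution. By linearity of expectation, $\mathbb{E}_{\mathcal{S}_t \mid I}\bigl[\frac{1}{K}\sum_{k \in \mathcal{S}_t} \w_t^k\bigr] = \mathbb{E}[\w_t^\kappa] = \sum_{j \in \Prio} p_j \w_t^j$, independent of $K$.

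Plugging this back in and dividing by the (constant) denominator converts each $p_j$ into $p_j'$ by the very definition of $p_k'$ stated at the beginning of the proof, producing $\sum_{j \in \Prio} p_j' \w_t^j$. Adding the non-priority contribution then reassembles $\sum_{j \in \Prio} p_j' \w_t^j + \sum_{k \notin \Prio} p_k' \w_t^k I_{k,\tau(t)}$, which coincides with $\w_t = \sum_{k=1}^N p_k' \w_t^k I_{k,\tau(t)}$ because the extended definition of $I_{k,\tau(t)}$ used in this subsection forces the indicator to equal $1$ for every priority client.

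I do not anticipate a serious obstacle: the argument is essentially the standard unbiasedness of a sample mean under i.i.d. categorical sampling. The only thing that requires care is the bookkeeping of the conditioning, together with the recognition that the sampling probabilities $\{p_j\}_{j \in \Prio}$ normalize to one over $\Prio$ alone; this is precisely what makes the scalar prefactor line up with $p_j'$ after the division by $1 + \sum_{k' \notin \Prio} p_{k'} I_{k',\tau(t)}$.
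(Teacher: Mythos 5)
Your argument is correct and is essentially the proof the paper intends: the paper omits it, deferring to Lemma~4 of \citet{li2020convergence}, and your computation --- conditioning on the indicators so that the denominator and non-priority contribution are deterministic, then using the unbiasedness of the $K$-fold i.i.d.\ categorical sample mean with $\sum_{j\in\Prio}p_j=1$ to recover $\sum_{j\in\Prio}p_j'\w_t^j$ --- is exactly that standard argument adapted to the renormalized weights. The bookkeeping points you flag (the prefactor turning $p_j$ into $p_j'$, and the extended indicator equaling $1$ on priority clients so the sum reassembles into $\w_t$) are precisely the right details to check.
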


\begin{lemma}\label{lem:varboundpar}(Bounding the variance of $\w^{\prime}_t$)
 Assume that Assumption~\ref{assump:nodiv} is true and $\eta_{\tau(t)} \leq 2\eta_{t}$ for all $t \geq 0$. If $t$ corresponds to a communication round then
the expected variaance between $\w^{\prime}_{t}$ and $\w_{t}$ is bounded by 
\begin{equation}
\mathbb E_{\mathcal{S}_t|I_{k,\tau(t)}} \left\lVert \w^{\prime}_{t} - \w_{t} \right\rVert^2 \leq \frac{4\eta_t^2 E^2G^2}{K}.
\end{equation}
\end{lemma}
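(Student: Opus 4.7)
The plan is to exploit two structural facts: by Lemma~\ref{lem:unbiasedboundpar}, $\w^{\prime}_t$ is conditionally unbiased for $\w_t$ under $\mathbb{E}_{\mathcal{S}_t|I_{k,\tau(t)}}$; and $\w^{\prime}_t$ is the conditional average of $K$ i.i.d.\ random vectors, so its variance should decay as $1/K$ via the classical sum-of-i.i.d.\ calculation. Crucially, because the non-priority contribution $\sum_{k\notin\Prio}p'_k\w_t^k I_{k,\tau(t)}$ appears identically in $\w^{\prime}_t$ and $\w_t$ once the indicators are frozen, the difference $\w^{\prime}_t - \w_t$ depends only on the priority-client sampling.

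Concretely, I would set $Z := 1 + \sum_{k\notin\Prio}p_k I_{k,\tau(t)}$ and, for the $i$-th sampled priority index $k_i\in\Prio$ drawn with probability $p_{k_i}$, define $X_i := \w_t^{k_i}/Z$, so that $\w^{\prime}_t - \w_t = \frac{1}{K}\sum_{i=1}^K (X_i - \mathbb{E}[X_i\mid I])$. Conditional independence kills the cross-terms and yields
\begin{align*}
\mathbb{E}_{\mathcal{S}_t|I_{k,\tau(t)}}\|\w^{\prime}_t - \w_t\|^2
\;=\; \frac{1}{K}\,\mathbb{E}\bigl[\|X_1 - \mathbb{E}[X_1\mid I]\|^2 \,\bigm|\, I\bigr]
\;\leq\; \frac{1}{K Z^2}\sum_{k\in\Prio} p_k\,\|\w_t^k - \w_{\tau(t-1)}\|^2,
\end{align*}
where I use that re-centering the variance at any deterministic point only increases the second moment, and I pick the last aggregated model $\w_{\tau(t-1)}$, which is fixed under the conditioning since at a communication round all clients share the previously aggregated parameters.

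Finally I would control $\|\w_t^k - \w_{\tau(t-1)}\|^2$ via the local-update recursion $\w_t^k - \w_{\tau(t-1)} = -\sum_{i=\tau(t-1)}^{t-1}\eta_i\nabla F_k(\w_i^k;\xi_i^k)$: Cauchy--Schwarz across the at most $E$ local steps, Assumption~\ref{assump:nodiv} (each squared stochastic gradient bounded by $G^2$), monotonicity of $\eta_i$, and the hypothesis $\eta_{\tau(t-1)}\leq 2\eta_t$ combine to give $\|\w_t^k - \w_{\tau(t-1)}\|^2 \leq 4 E^2 \eta_t^2 G^2$. Since $\sum_{k\in\Prio}p_k = 1$ and $Z\geq 1$, substituting recovers the claimed bound $4\eta_t^2 E^2 G^2/K$. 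The main obstacle is the bookkeeping of two layers of randomness: the conditioning on $\{I_{k,\tau(t)}\}$ must freeze $Z$, the local models $\w_t^k$, and the common starting point $\w_{\tau(t-1)}$ so that only $\mathcal{S}_t$ remains random and the $X_i$'s are genuinely i.i.d.; articulating this cleanly is what makes the classical $1/K$ variance decay go through in this more complex setting.
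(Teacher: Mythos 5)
Your argument is correct and is essentially the proof the paper intends: the paper omits it, deferring to Lemma~5 of \citet{li2020convergence}, and your decomposition---freezing the indicators so the non-priority term cancels, treating the $K$ sampled priority models as conditionally i.i.d.\ to get the $1/K$ variance decay, re-centering at the last aggregated model, and bounding the local drift by $4E^2\eta_t^2G^2$---is exactly that standard argument adapted to the renormalization factor $Z = 1+\sum_{k\notin\Prio}p_kI_{k,\tau(t)}$ (with $1/Z^2\le 1$ and $\sum_{k\in\Prio}p_k=1$ recovering the stated constant). The only nit is that Assumption~\ref{assump:nodiv} bounds the stochastic gradients in expectation rather than almost surely, so the drift bound needs an additional expectation over the mini-batches $\xi$; this imprecision is already present in the paper's own conditional-expectation statement of the lemma, so it does not count against you.
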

These lemmas are straightforward extensions to Lemmas $4$ and $5$ in \cite{li2020convergence}, so we skip the proof here.

Now consider one-step SGD, under this new aggregation method
\begin{align}
    &\|\w^{\prime}_{t+1} - \w^*\|^2 = \|\w^{\prime}_{t+1} - \w_{t+1} + \w_{t+1} - \w^*\|^2 \nonumber \\
    &= \|\w_{t+1} - \w^*\|^2 + 2\innerproduct{\w^{\prime}_{t+1} - \w_{t+1}}{\w_{t+1} - \w^*} + \|\w^{\prime}_{t+1} - \w_{t+1}\|^2 \nonumber \\
    &\stackrel{(a)}{\leq} \underbrace{\|\w_{t+1} - \w^*\|^2}_{B_1} + \frac{4\eta_t^2E^2G^2}{K},
\end{align}
where $(a)$ is due to Lemma~\ref{lem:unbiasedboundpar} and \ref{lem:varboundpar}. Notice that $B_1$ can be upper bounded with Lemma~\ref{lem:onestepsgd}. Therefore we get a upper bound with an added term (if we re-define $\Delta_{t}:= \|\w^{\prime}_{t} - \w^*\|^2$) 
\begin{align}\label{eq:partialonestep}
    \Delta_{t+1} &\leq (1- \mu_t \eta_t)\|\w_{t} - \w^*\|^2 + 8\eta_t^2 (E-1)^2 G^2 + \frac{4\eta_t^2E^2G^2}{K} + \eta_t^2 \sigma^2  + 6L\eta_t^2 E_t + \eta_t D_t \nonumber \\
    &\leq (1- \mu_t \eta_t)\Delta_t + 8\eta_t^2 (E-1)^2 G^2 + \frac{8\eta_t^2E^2G^2}{K} + \eta_t^2 \sigma^2  + 6L\eta_t^2 E_t + \eta_t D_t.
\end{align}

The above equation \eqref{eq:partialonestep} is similar to the bound in Lemma~\ref{lem:onestepsgd}, except for an additional bias due to the variance between $\w^{\prime}_t$ and $\w_t$. Therefore we can use the same inductive arguments here too. 
We skip this since this is a straightforward  extension of the proofs considered in the previous sections and state the a combined theorem, under partial participation of the priority clients and arbitrary participation of the non priority clients below. 

\begin{theorem}\label{thm:partial} (Convergence under partial participation) Suppose that Assumptions~\ref{assump:smooth}, \ref{assump:convex}, 
\ref{assump:unbiased}, and \ref{assump:nodiv} hold, and we set a decaying learning rate $\eta_t = \frac{2}{\mu(t + \gamma)}$, where $\gamma = \frac{8L}{\mu}$ 
and any $\epsilon_t \geq 0$.
The expected error following $T$ total local iterations (equivalent to $ T/E$ 
communication rounds) for \textnormal{FedALIGN}
under partial participation of the priority clients and arbitrary participation of the non-priority clients, satisfies 
\begin{align}
    \mathbb E[F(\w_{T})] - F^* \leq \frac{1}{T+\gamma} \left(C_1^{\prime} + C_2 \theta_T\Gamma\right)  + \rho_T,
\end{align}
where $\theta_T \in [0,1]$ and $\rho_T$ can be adjusted by selecting an appropriate $\epsilon_t$ 
in each time step ($\rho_T$ and $\theta_T$ are defined as before). Here, $C_1^{\prime}$ and $C_2$ are constants given by
\begin{align}
    C_1^{\prime} := \frac{2L}{\mu^2}\left(\sigma^2 + 8(E-1)^2G^2 + \frac{8E^2G^2}{K}\right) + \frac{4L^2}{\mu}\|\w_0 - \w^*\|^2, \quad 
    C_2 := \frac{12L^2}{\mu^2}. \nonumber
\end{align}
\end{theorem}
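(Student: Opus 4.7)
\textbf{Proof plan for Theorem~\ref{thm:partial}.} The plan is to reduce the partial participation analysis to the full participation argument already developed for Theorem~\ref{thm:conv_new}, paying the price of an additional variance term. First, I would redefine the indicator $I_{k,\tau(t)}$ for each non-priority client $k$ to include both the loss-matching event and an arbitrary participation event, as done in the preceding subsection. With this redefinition, the ``virtual'' aggregation $\w_t = \sum_k p'_k \w_t^k I_{k,\tau(t)}$ and the aggregated gradient $\overline{\g}_t$ carry over verbatim, and all bounds in Lemmas~\ref{lem:l-smooth}--\ref{lem:sgdvar} still hold because their proofs only used $p'_k \le 1$ and $\sum_k p'_k I_{k,\tau(t)} = 1$.

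Next I would treat the priority-client sampling $\mathcal{S}_t$ as the only new source of randomness on top of the full-participation analysis. The key observation is that, conditioned on the non-priority indicators $\{I_{k,\tau(t)}\}$, the sampled aggregate $\w'_{t+1}$ is an unbiased estimator of $\w_{t+1}$ with variance bounded by $4\eta_t^2 E^2 G^2 / K$ (Lemmas~\ref{lem:unbiasedboundpar} and \ref{lem:varboundpar}). I would then expand
\begin{align}
\|\w'_{t+1} - \w^*\|^2 = \|\w_{t+1} - \w^*\|^2 + 2\innerproduct{\w'_{t+1}-\w_{t+1}}{\w_{t+1}-\w^*} + \|\w'_{t+1}-\w_{t+1}\|^2,
\end{align}
take conditional expectation over $\mathcal{S}_{t+1}$, use unbiasedness to kill the cross-term, and use the variance bound to control the last term. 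Plugging Lemma~\ref{lem:onestepsgd} into the first term yields exactly the bound \eqref{eq:partialonestep}, which has the same structure as the one-step bound in Lemma~\ref{lem:onestepsgd} except for the extra $\tfrac{8\eta_t^2 E^2 G^2}{K}$ summand absorbed into the ``$B_t$'' coefficient.

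Finally, I would redo the induction from the full participation proof with this augmented $B_t$. Since the additional term has the same $\eta_t^2$ scaling as the existing $8(E-1)^2 G^2$ and $\sigma^2$ terms, it simply gets folded into the constant that multiplies $\eta_t^2$ in the recursion. The induction on $\Delta_t \le v_t / (t+\gamma)$ goes through unchanged, and the $L$-smoothness conversion at the end produces the bound with $C_1'$ replacing $C_1$ while $C_2$, $\theta_T$, and $\rho_T$ are untouched, exactly as claimed in the statement.

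\textbf{Main obstacle.} The one subtlety worth being careful about is the order of conditioning: the non-priority selection rule $I_{k,\tau(t)}$ can depend on $\w_t^k$ and $\epsilon_t$, while the priority sampling $\mathcal{S}_t$ is independent of everything else. To make the unbiasedness argument rigorous, I would take expectation over $\mathcal{S}_t$ first, conditional on all $\{p'_k, I_{k,\tau(t)}\}$, and only afterwards take the outer expectation used to define $\theta_T$ and $\rho_T$. Once this conditioning structure is made explicit, the remaining work is just bookkeeping; no new analytical ideas beyond those in the full-participation proof are needed.
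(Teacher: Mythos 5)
Your proposal matches the paper's proof essentially step for step: redefining $I_{k,\tau(t)}$ to absorb arbitrary non-priority participation, using the unbiasedness and variance bounds of Lemmas~\ref{lem:unbiasedboundpar} and~\ref{lem:varboundpar} conditioned on the non-priority indicators, expanding $\|\w'_{t+1}-\w^*\|^2$ to kill the cross-term and absorb the $O(\eta_t^2 E^2 G^2/K)$ variance into the $\eta_t^2$ coefficient of the one-step recursion, and rerunning the induction to replace $C_1$ with $C_1'$. Your remark on the order of conditioning is exactly the structure the paper uses implicitly, so no gap remains.
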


\section{Experiment Details}
We first describe the models and data distributions we used to run our main experiments in more detail. All our code is available in the supplementary material. 

\subsection{Benchmark Datasets}
Following the procedure in \cite{BrendanMcMahan2017}, we split the data into shards and assign multiple shards to each client. The loss used in all models is the cross-entropy loss.

For the FMNIST dataset we trained a Logistic Regression model with $784 \times 10$ fully connected layer. The dataset is split into $120$ shards with each shard containing $500$ samples of a single class. Each of the $60$ clients are assigned two shards each. 

For the \emph{balanced-}EMNIST dataset we trained a $2$-NN network with $784 \times 200$ input layer, followed by a $200 \times 200$ hidden layer and $200 \times 47 $ output layer. The dataset is split into $600$ shards with each shard containing $180$ samples of a single class. The $60$ clients are provided with $24$ shards each. 

For the CIFAR10 dataset we use a CNN with the first convolutional layer $5\times 5$ kernel, with $32$ output channels. The next convolutional layer again uses a $5\times 5$ kernel with $64$ output channels. We then attach sequential layers (fully connected) with $(512 \times 128)$ dimension. This is followed by a $128 \times 10$ output layer. The nodes with ReLU activation include a Kaiming initialization of weights. We also include batch normalization after the second convolutional layer and both the sequential layers. The data assignment is the same as that of FMNIST.

\subsection{Synthetic Datasets} 
We follow an extended version of the setup \cite{Li2018} for generating synthetic data. 
For each device $k$, samples $(X_k, Y_k)$ are generated according to the model $y = \text{argmax}(\text{softmax}(Wx + b))$, where $x \in R^{60}$, $W \in R^{10 \times 60}$, and $b \in R^{10}$. 
Heterogeneity is introduced by drawing $W_k$ and $b_k$ from normal distributions, $N(u_k, 1)$, with $u_k \sim N(0, \alpha)$. Each input $x_k$ is drawn from $N(v_k, \Sigma)$, where $\Sigma$ is a diagonal matrix with $\Sigma_{j,j} = j^{-1.2}$. Each element in the mean vector $v_k$ is drawn from $N(B_k, 1)$, where $B_k \sim N(0, \beta)$.
For our main experiments we choose $\alpha = 1$, while $\beta = 1$. This gives us the data possessed by the priority clients.


We extend the setup to include non-priority clients where global data is distributed and progressively noisy data is added. Two forms of noise are considered: 1) Label flips, with the maximum noise range across all non-priority clients determined by ``label noise factor'' and ``label noise skew factor'' controls how skewed towards this maximum noise clients are. If ``label noise skew factor'', it means you have more high noise non-priority clients.
2) Irrelevant independent data points, with the maximum fraction of irrelevant data points determined by ``random data fraction factor'' and ``random data fraction skew factor'' controls how skewed towards containing this maximum  clients are, in similar way as before.

These noise additions aim to model more realistic scenarios of heterogeneity between priority and non-priority clients, where non-priority clients are likely to contain down-sampled, low quality and irrelevant data that may harm the objective generated by the priority clients. We use the skews to model that these non-priority clients in general are aligned at variable levels to the global objective. High skews imply a larger number of non-priority clients are misaligned.

We use the $\textsc{Synth}(1,1)$ for our experiments, which is generated by fixing $\alpha = 1$ and $\beta = 1$.
In constructing our non-priority clients, we fix ``random data fraction factor'' $= 1$ and ``label noise factor'' $= 2.5$. 
We then pick three different parameter values for ``label noise skew factor'' $= 0.5,1.5,5$ and ``random data fraction skew factor'' $= 0.5,1.5,5$. These specific skew factors correspond to tags of low, medium and high noise in Figure~\ref{fig:synth}.

\section{Additional Experiments}
In this section we consider some additional experiments that further highlight the efficacy of FedALIGN.

\textbf{Outline of additional experiments.} In the following subsections we add the following additional experiments: 
\begin{itemize}
    \item Performance of FedALIGN vs Models trained locally of Non-Priority Clients
    \item Performance of FedALIGN vs Baselines, adapted for FedProx
    \item Performance of FedALIGN vs Baselines under partial client participation
    \item More experiments under different number of priority clients, different number of local iterations and different noise skews for the synthetic data
\end{itemize}
\subsection{Performance of FedALIGN vs models trained locally at non-priority clients}
In our experiments, we assess the accuracy of models trained using FedALIGN on test data of non-priority clients. To simulate a realistic, resource-constrained environment, we consider a scenario in which each client possesses only $50$ samples (as opposed to $500$ samples each client possessed in our main experiments). 
This scenario more closely models real-world situations where local models are hindered by limited computational power or inadequate data to develop an optimal model.

Our experimental findings, represented in Figure~\ref{fig:local}, demonstrate a significant performance enhancement for FedALIGN over local models across all our benchmark datasets in this setting.  This result underlines a compelling motivation for non-priority clients to participate in the federated learning framework, since they stand to benefit from a more robust global model.

However, as we have outlined in the main text, the superiority of a global model is not the sole reason incentivizing client participation. A working global model consistently aids in refining local models, which further boosts their performance. Thus, FedALIGN produces satisfactory models, which adds another layer of incentive for clients to join the federation.

\begin{figure}[htb!]
\centering
\includegraphics[width=1\linewidth]{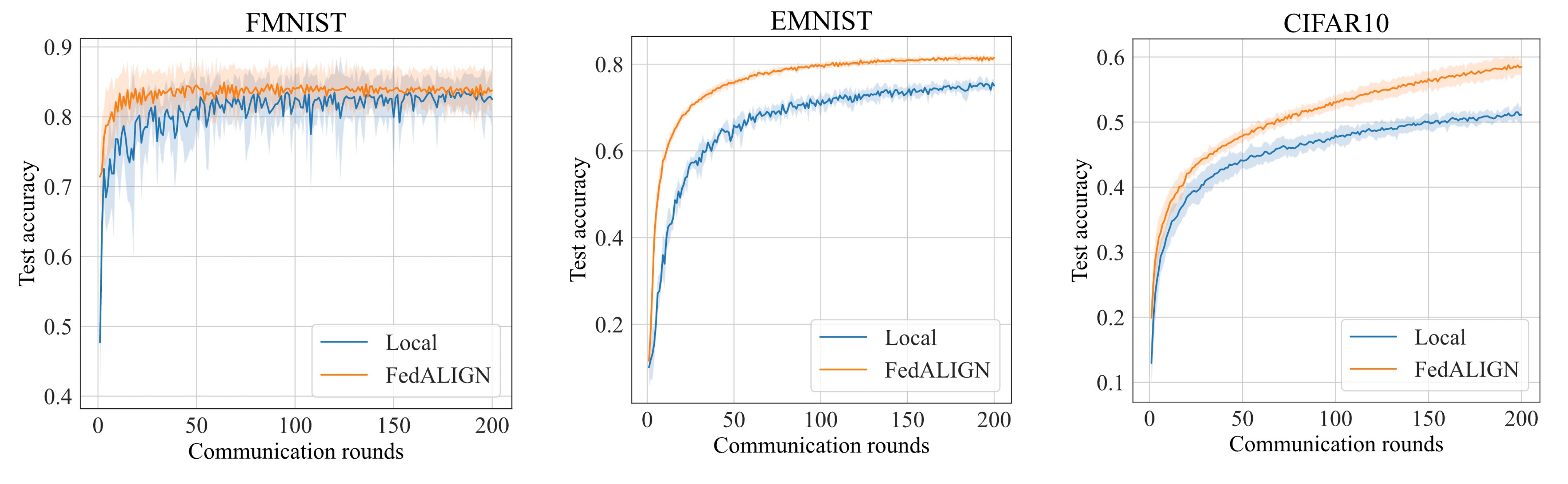}
\caption{\label{fig:local}
Test accuracy for benchmark datasets, on test data generated for local clients, where each client possess $50$ data samples. FedALIGN (orange), converges faster and performs better than the local model.}
\end{figure}

\subsection{Performance of FedALIGN vs Baselines, adapted to FedProx}

In our experiment, we consider FedProx \cite{Li2018} by incorporating a proximal regularization term (between the global and local model) in the local loss function of each client, amplified by a factor $\mu$ set at 1. We investigate a setting with $4$ priority clients and $60$ non-priority ones, maintaining the data distribution outlined in the paper's main section. The value of $\epsilon$ is set to $0.2$.

We establish two baseline scenarios: one applying FedProx exclusively to the priority clients, the other extending it to all clients. These are compared against our FedALIGN variant, tailored to FedProx. Notably, our selection criteria operates as a separate, algorithm-independent step, simplifying its application to FedProx.

As illustrated in Figure~\ref{fig:prox}, the results demonstrate FedALIGN's clear superiority over the baselines, underscoring its effectiveness in this context.
\begin{figure}
\centering
\includegraphics[width=1\linewidth]{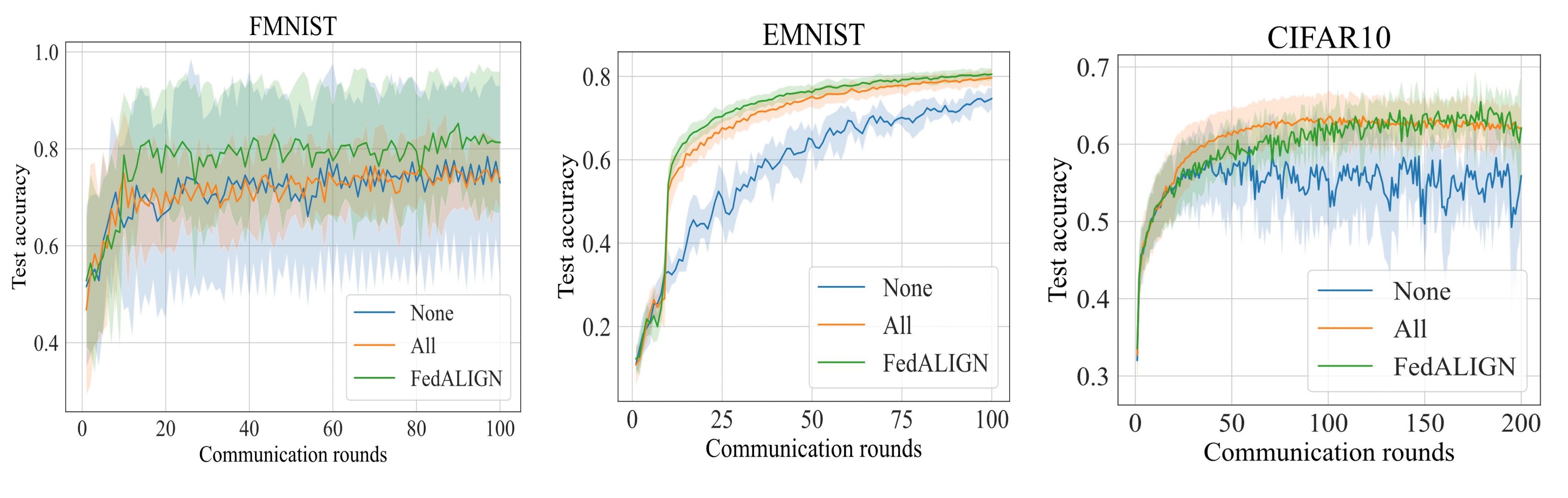}
\caption{\label{fig:prox}
Test accuracy for FMNIST and EMNIST, when FedALIGN is adapted to FedPROX case, under full participation, with $4$ priority clients and $N = 60$ clients, with $E =5$. There is a clear advantage to using FedALIGN both in terms of convergence speed and improved test accuracy.}
\end{figure}
\subsection{Performance of FedALIGN vs Baselines under the partial participation scenario}
In this experiment, we examine a partial participation scenario where subsets of clients, both priority and non-priority, are selected uniformly at random in each communication round. The client updates received from this subset form the basis of our investigation.

For this case, we consider a total of $60$ clients, with $18$ designated as priority clients. This increased count of priority clients is required for sufficient representation in every communication round. However, this configuration induces a more homogeneous global objective since more clients contributing to the global objective, encompasses more classes. For EMNIST, since each client gets $24$ shards, we designate $4$ clients to be priority clients (sufficient to lead to homogeneity).

As anticipated, we observe an improvement in FedALIGN performance under these circumstances. As predicted, the gains are not as significant as those we illustrated in the main result, where we only consider $2$ priority clients, where the advantage of faster convergence is particularly evident because it is a more heterogeneous setup. The reason for this is because the full participation case considers $2$ external clients with only 4 classes, which leads to significant heterogeneity.
The reason for the smaller gain is due to the increased homogeneity in the global objective when more clients are included.
Moreover we point to the synthetic experiments in Figure~\ref{fig:synth}, which allow us to control heterogeneity between priority clients and how well aligned the non-priority clients are, even when we include more priority clients. This models more realistic cases of practical PFL and FedALIGN does significantly well in those cases. 
In subsection~\ref{sub:moreexp}, we demonstrate this phenomenon under the full participation case, further underscoring the importance of client heterogeneity being key to the success of FedALIGN, which is an important property in realistic FL settings.

For this experiment, we chose $0.3$ ($0.5$ for the case of EMNIST, because there are lesser priority clients) as the fraction of clients involved in each round and set the value of $\epsilon$ to $0.2$. Thus, while our setup performs efficiently even with partial participation, the experiment underscores the impact of client diversity on the effectiveness of FedALIGN.

\begin{figure}[htb!]
\centering
\includegraphics[width=1\linewidth]{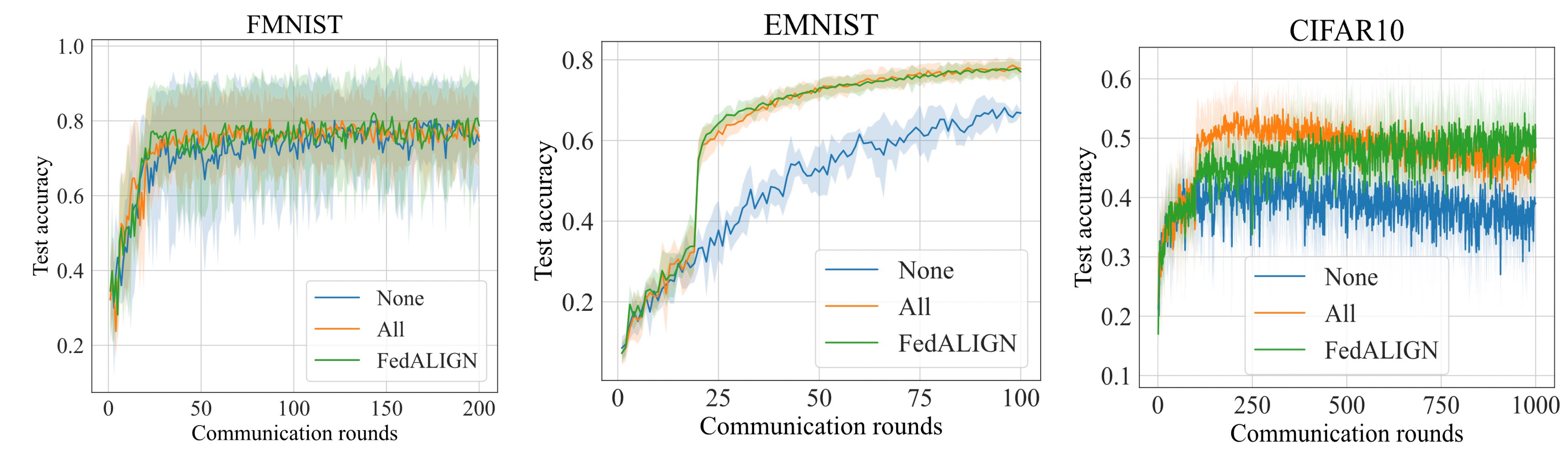}
\caption{\label{fig:partial}
Test accuracy for benchmark datasets, in the partial participation case, with $18$ priority clients and $N = 60$ clients for FMNIST and CIFAR, and $4$ priority clients for EMNIST, with $E =5$.}  
\end{figure}
\subsection{More experiments under different parameter choices}\label{sub:moreexp}
Here we provide results for additional experiments under various different parameter choices, where we can consider cases where more clients are included in the priority set and different local updates. Clearly as we start increasing the number of priority clients, the gap between the performance gains in FedALIGN is lower. This is due to a reduced heterogeneity in the global objective as you start increasing the number of clients. In the second plot of Figure~\ref{fig:various} we also consider reducing the number of local updates to $E = 3$. We set $\epsilon = 0.2$ for all these experiments. 
\begin{figure}[htb!]
\centering
\includegraphics[width=1\linewidth]{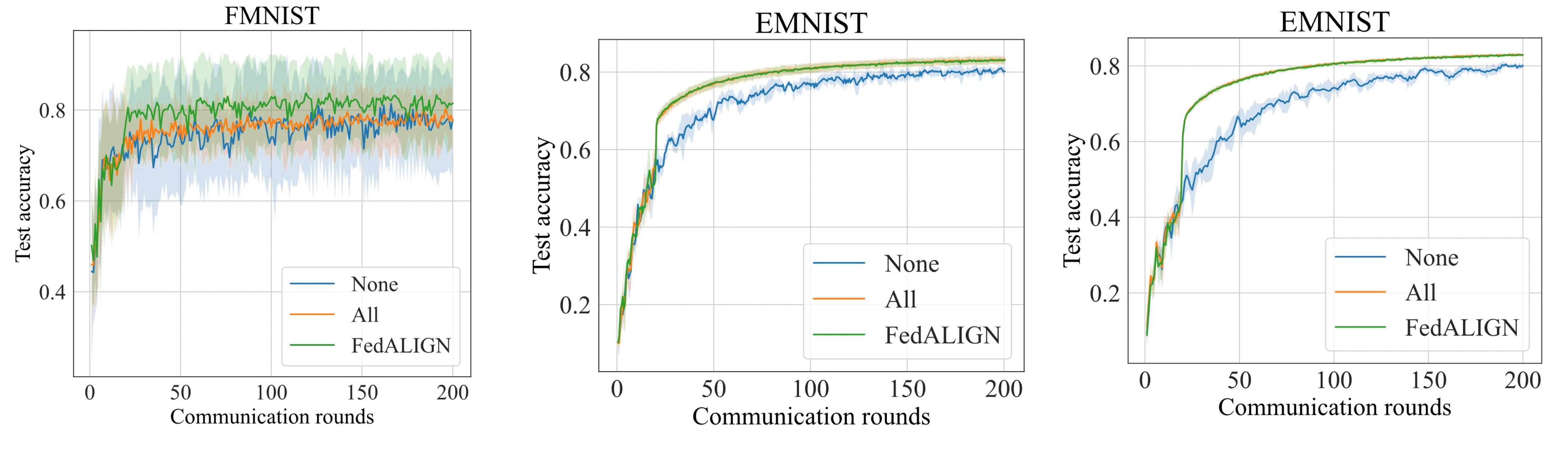}
\caption{\label{fig:various}
(1) Test accuracy for FMNIST, in the full participation case, with $6$ priority clients and $N = 60$ clients, with $E =5$, (2) Test accuracy for EMNIST, in the full participation case, with $9$ priority clients and $N = 60$ clients, with $E =3$ and (3) Test accuracy for EMNIST, in the full participation case, with $18$ priority clients and $N = 60$ clients, with $E =5$.}
\end{figure}


\end{document}